\providecommand{\algorithmname}{Algorithm}
\theoremstyle{plain}
\theoremstyle{definition}
\theoremstyle{plain}
\theoremstyle{definition}
\theoremstyle{plain}
\newtheorem{define}{Definition}
\newtheorem{assumption}{Assumption}
\newtheorem{theo}{Theorem}
\newtheorem{remark}{Remark}
\def\eg{\textit{e.g.,}\xspace}
\def\etc{\textit{etc}\xspace}
\def\ie{\textit{i.e.,}\xspace}
\def\etal{\textit{et al.}\xspace}
\newtheorem{lemma}{Lemma}
\providecommand{\keywords}[1]{\textbf{\textit{Index terms---}} #1}
\begin{document}
	\captionsetup[figure]{font={small}, name={Fig.}, labelsep=period}
	
	\title{ Free Privacy Protection for Wireless Federated Learning: Enjoy It or Suffer from It?}

		\author{Weicai~Li,~\textit{Graduate Student Member, IEEE},~Tiejun~Lv,~\textit{Senior Member, IEEE},\\~Xiyu~Zhao,~Xin~Yuan,~and~Wei~Ni\\ 

\thanks{Manuscript received 24 July 2024; revised 06 February 2025, 03 May 2025, and 11 June 2025; accepted 12 June 2025. This paper was supported in part by the National Natural Science Foundation of China under No. 62271068, and the Beijing Natural Science Foundation under Grant No. L222046. (\emph{corresponding author: Tiejun Lv}.)}

\thanks{W. Li, T. Lv, and X. Zhao are with the School of Information and Communication Engineering, Beijing University of Posts and Telecommunications, Beijing 100876, China. W. Li is also with the University of Technology Sydney, Broadway, NSW, Sydney. X. Zhao is also with Macquarie University, Macquarie Park, NSW, Sydney. (e-mail: \{liweicai, lvtiejun, zxy\}@bupt.edu.cn). 

X. Yuan is with Tongji University, Shanghai 201804,
China.

W. Ni is with Macquarie University, Macquarie Park, NSW 2109, Australia.}}
	
\maketitle

\begin{abstract}
Inherent communication noises have the potential to preserve privacy for wireless federated learning (WFL) but have been overlooked in digital communication systems predominantly using floating-point number standards, \eg IEEE~754, for data storage and transmission. This is due to the potentially catastrophic consequences of bit errors in floating-point numbers, \eg on the sign or exponent bits. This paper presents a novel channel-native bit-flipping differential privacy (DP) mechanism tailored for WFL, where transmit bits are randomly flipped and communication noises are leveraged, to collectively preserve the privacy of WFL in digital communication systems.  
The key idea is to interpret the bit perturbation at the transmitter and bit errors caused by communication noises as a bit-flipping DP process. 
This is achieved by designing a new floating-point-to-fixed-point conversion method that only transmits the bits in the fraction part of model parameters, hence eliminating the need for transmitting the sign and exponent bits and preventing the catastrophic consequence of bit errors.
We analyze a new 
metric to measure the bit-level distance of the model parameters and prove that the proposed mechanism satisfies $(\lambda,\epsilon)$-Rényi DP and does not violate the WFL convergence. 
Experiments validate privacy and convergence analysis of the proposed mechanism and demonstrate its superiority to the state-of-the-art Gaussian mechanisms that are channel-agnostic and add Gaussian noise for privacy protection. 
\end{abstract}
\keywords{Wireless federated learning, differential privacy, floating-point number, communication error, bit-flipping.}
\section{Introduction}
\label{sec:intro}
Privacy-preserving federated learning (FL) integrates privacy models into a distributed machine learning (ML) framework, offering provable privacy assurances~\cite{Tang2023PILE,Zhou2023A,bassily2014private, DLDP, Chollet2017Xception}.
Differential privacy (DP) is prevalent with several variants studied extensively, including $\epsilon$-DP~\cite{barthe2011information}, $(\epsilon, \delta)$-DP~\cite{acs2018differentially}, Rényi DP~\cite{Mironov_2017}, and $(\lambda, \epsilon)$-Rényi DP~\cite{zhu2017differentially,fu2021practicality, Mironov_2017}.
Combining DP with FL permits clients to train their local models within specified privacy protection levels
\cite{Dong2023Privacy,Zhao2020Local}.
Existing privacy-preserving mechanisms, \eg Gaussian mechanism~\cite{9252950,Wei2020Federated,10073536,fu2021practicality} and  Laplace mechanism~\cite{10073536,fu2021practicality},
typically add artificial DP noise to the (decimal) local model parameters at the clients' side before digitizing and uploading them to a server.

\textbf{Motivation.} 
Wireless channels are noisy and prone to errors due to various factors like thermal noise—also known as Johnson-Nyquist noise—which results from the random movement of electrons in electronic circuits and cannot be entirely eliminated~\cite{hansen1986nyquist}. 
Nevertheless, the communication noises and their resulting errors may offer an unexpected benefit for privacy protection in wireless FL~(WFL). 

Both the benefits of leveraging communication noises for preserving the privacy of WFL and the consequences of not doing so are obvious.
On the one hand, a limited amount of communication errors in the received local models can be tolerated to enhance privacy preservation, as shown in over-the-air (OTA) FL~\cite{9252950,10364879}. By tolerating such errors, more local models with tolerable, mild errors can be aggregated within the available time and spectrum resources, compared to conventional methods that discard erroneous packets or wait for retransmissions~\cite{9435350,Wei2020Federated}.
On the other hand, if the privacy-preserving potential of noisy wireless channels is overlooked, unnecessarily tight privacy budgets and high DP noises would be required, thus hindering the convergence of WFL. 
In this sense, WFL either enjoys the inherent privacy protection offered by noisy wireless channels, or suffers from ineffective communication resource utilization and penalized convergence and accuracy.

No existing studies have exploited the inherent privacy protection offered by noisy wireless channels in digital communication systems.
The authors of~\cite{9252950,9693141,10364879} proposed a privacy-preserving over-the-air (OTA) FL system that leverages communication noise to enhance privacy under analog communication settings. However, implementing such an analog approach would face practical challenges, including high noise sensitivity and low spectrum efficiency, which have driven the global transition from analog communication to digital transmission (e.g., cellular networks, broadcasting, etc.).
The authors of \cite{lai2022bitaware} introduced a bit-aware randomized response mechanism for local differential privacy. Considering that different bits have different impacts on embedded features, the mechanism heuristically assigns the bits with more significance with smaller randomization probabilities. 
However, communication noise is generally assumed to impact all transmitted bits uniformly, meaning each bit has an equal probability of being flipped.

\textbf{Research Challenges.} 
It is non-trivial to exploit the communication noises 
for the privacy of WFL in digital communication systems due to the predominant use of floating-point number standards, \eg IEEE~754~\cite{8766229}.
In practice, data and information are stored and transmitted in bits following the floating-point number standards, which necessitate fidelity during data storage and transmission. 
For example, the IEEE~754 floating-point standard~\cite{8766229} encodes every number into bits with varying levels of significance, including a sign bit, exponent bits, and fraction bits. 
A single error in the sign or exponent bits can drastically distort the value.
To this end, WFL, when adhering to current practical floating-point standards as implicitly assumed in existing FL designs~\cite{Wei2020Federated,10073536}, neither tolerates communication errors nor facilitates the exploitation of communication noise for privacy protection.
Moreover, it is challenging to determine the sensitivity and subsequently the privacy budget of the bitstreams, \eg IEEE~754 floating-point numbers. The reason is that two bitstreams can represent similar (decimal) values but differ dramatically in bits. Exploiting communication noises for privacy protection in WFL requires careful consideration of these factors to balance error tolerance with meaningful privacy benefits.

\textbf{Contribution.} This paper presents a novel \textit{channel-native bit-flipping DP mechanism} tailored for WFL in digital communication systems using floating-point standards. 
By integrating both artificial and communication noises at the bit level, this mechanism ensures compliance with $(\lambda,\epsilon)$-Rényi DP, thereby bolstering the privacy of WFL. We provide a comprehensive analysis of the statistical properties of bit-flipping noise and its impact on the convergence of WFL.

The proposed mechanism offers privacy-by-design WFL through its compatibility with digital systems and by exploiting the inherent privacy-preserving properties of noisy wireless channels. 
The mechanism transforms the model parameters of WFL from standard floating-point representations, necessitating error-free transmissions, into a new fixed-point format that only requires the fraction bits to be transmitted, tolerates bit errors in the model parameters received, and can thus ``enjoy'' communication noise for privacy protection. 

The contributions of this paper are summarized as follows.
\begin{itemize}
    \item 
    We propose a novel mechanism to protect the privacy of WFL by leveraging inherently noisy wireless channels. 
    The mechanism obfuscates model parameters in bits, compatible with floating-point standards, \eg  IEEE~754, widely adopted in the systems.
    
    \item 
  We design a novel floating-point-to-fixed-point conversion that not only significantly reduces the number of transmitted bits but, more importantly, eliminates the risk of catastrophic model distortion associated with transmissions and noise corruption of the sign or exponent bits of floating-point model parameters.

    \item 
    We prove that the proposed mechanism strictly satisfies $(\lambda,\epsilon)$-Rényi DP. This involves the analysis of a new metric to measure the bit-level distance, concerning bitstreams encoded from local model parameters, and rigorous derivation of the bit-flipping probability.

\item 
We analytically corroborate the convergence of WFL under the proposed mechanism, revealing that the convergence upper bound is affected not only by the privacy budget but also by the number of total training iterations and local epochs per communication round. 
\end{itemize}

Extensive experiments validate the privacy and convergence analysis of the proposed channel-native bit-flipping DP mechanism for WFL.
Two image classification tasks are performed, using a convolutional neural network (CNN) on the Federated Extended MNIST (F-MNIST) dataset and using the 18-layer residual network (ResNet-18) on the Federated CIFAR100 (Fed-CIFAR100) dataset. Our mechanism  markedly outperforms its channel-agnostic counterparts, and 
exhibits substantial superiority to the state-of-the-art Gaussian mechanism developed under the prerequisite that only the error-free parts of (obfuscated) local model parameters are aggregated.

The rest of this paper is organized as follows. The related works are discussed in Section~\ref{sec:relwork}, followed by the system model in Section~\ref{sec:pre}. In Section~\ref{model}, we present the new channel-native bit-flipping DP mechanism. The privacy and convergence of WFL under the mechanism are discussed in Sections~\ref{sec:privacy} and~\ref{sec:convergence}, respectively. Experiments are presented in Section~\ref{section:results}, followed by conclusions in Section~\ref{sec:conclusion}. 
Table \ref{notations} collates the notation used in this paper.

\begin{table}[t]
   \small
		\centering
		\caption{Notation and definitions}\label{notations}
		\begin{tabular}{ l|p{7cm}  }
			\hline
			\textbf{Notation}& \textbf{Definition} \\
			\hline
			$n,\,N$   &Device index, and total number of devices \\
            $\mathcal{N}$ & Set of device indices \\
			$m,\,M$   & Dimension index, and total number of model dimensions \\
            
            $j$ & Bit index in the fraction part\\
            $K$   & Total number of communication rounds\\
            $E$   & Total number of local training iterations per round\\
            $\mathcal{I}_{E}$ & Index set of iterations that collects global aggregations\\
            $t,\,T$   & Index and total number of local training iterations\\
            $d,\mathcal{D}$   &Data point, and dataset\\
            
           $\mathbf{w}_{t,n}$ & Input  model of device $n$ at the $t$-th iteration\\   
            $\boldsymbol{\omega}_{t,n}$ & Output model of device $n$ at the $t$-th iteration\\  
           $\tilde{\boldsymbol{\omega}}$&Recovered local model (with error) at server\\ 
           $\tilde{\boldsymbol{u}}$&Bitstream perturbed by artificial noise\\ 
           $\hat{\boldsymbol{u}}$&Bitstream perturbed by artificial and communication noise\\
              $\boldsymbol{\omega}_{t,\mathcal{G}}$&Global model at the $t$-th local training iteration\\
             $\Delta\boldsymbol{\omega}_{\max} $& Classical sensitivity\\
             ${{\bar\kappa}_{\Delta\boldsymbol{\omega}_{\max} }}$&Expected bit-level distance\\
             $\nu_2,\,\nu_{\infty}$ &  Maxima of the $\ell_2$-norm and $\ell_{\infty}$-norm of the model parameter vector\\

		\hline
		\end{tabular}
	\end{table}

\section{Related Work}\label{sec:relwork}

This section summarizes the studies on privacy preservation in FL/WFL systems. Although many have been on the privacy of WFL, none have perturbed WFL model parameters at the bit level nor considered practical floating-point data formats used in digital systems.

\begin{figure}[t]
    \centering
\includegraphics[scale=0.6]{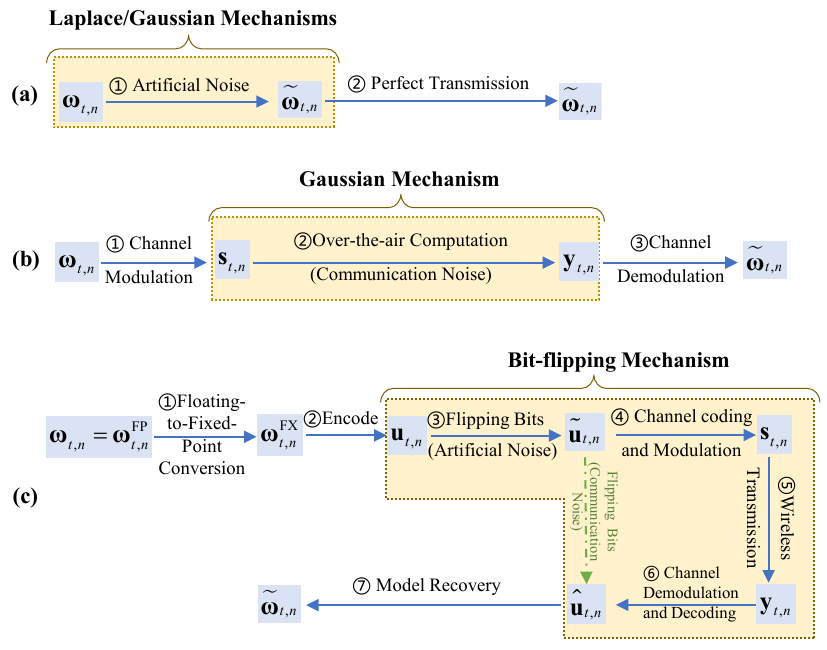}
    \caption{The workflows of the local model transmission of device $n$ under different DP mechanisms when $t\in\mathcal{I}_E$: (a) Channel-agnostic DP mechanisms, \eg Laplace or Gaussian mechanism, only use artificial noise and do not exploit noisy communication channels; (b) The DP mechanism for OTA-FL systems uses the additive Gaussian noise to protect privacy, but does not support digital transmission; (c) The proposed channel-native bit-flipping DP mechanism. }
    \label{fig:workflow}
\end{figure}
\subsection{Privacy-Preserving FL/WFL}
Existing studies have mainly focused on perturbing the model parameters in decimal amplitude prior to digitization to bitstreams, as depicted in Fig.~\ref{fig:workflow}(a). 
In~\cite{Wei2020Federated} and~\cite{10073536}, artificial Gaussian noise was applied to the (decimal) local model parameters before model transmission and aggregation, where the DP requirement was satisfied by adapting the variance of the Gaussian noise.
The authors of~\cite{fu2021practicality} and~\cite{10073536} demonstrated analytically and experimentally that the DP-based FL/WFL with the Laplace mechanism is worse than that with the Gaussian mechanism because the Laplacian mechanism adds larger noises sampled from a Laplacian distribution, which is farther away from its mean.
Yuan \etal~\cite{10073536} designed a novel amplitude-varying perturbation mechanism, where the DP noise is a geometric series changing over global aggregations to balance the privacy and utility in FL/WFL. 
All these schemes perturb the model parameters of FL/WFL in the decimal domain before the model parameters are digitized into bitstreams, yielding floating-point number structures.

\subsection{WFL under Imperfect Communications }\label{sec:wfl}
Many studies have attempted to develop communication-efficient algorithms to improve the performance of WFL, without consideration of the privacy of model parameters. Model compression~\cite{9660377,10364879} and quantization~\cite{9834296} have been exploited to reduce communication overhead, at the cost of FL performance due to compression and quantization errors. In~\cite{9435350}, it was revealed that the user transmission success probability, affected by unreliable and resource-constrained wireless channels, critically affects the convergence of WFL.
In~\cite{10032555}, a user datagram protocol with retransmissions was employed to recover from model transmission failures at the cost of delays and signaling overhead.
Operating in the analog domain, OTA-FL allows the participating clients to upload their models within the same resource block~\cite{9076343,10145450,10364879}.
These models can be aggregated over the air due to the additivity of signals in the electromagnetic field.
OTA-FL suffers from imperfect channel state information (CSI) with estimation errors~\cite{10261509}. 
No consideration has been given to the privacy of the model parameters in these studies.

\subsection{Privacy-Preserving WFL}

Some studies~\cite {10041968,9693141,9252950} have started to utilize the system-level noises, \eg quantization errors or communication noise, to implement DP mechanisms. 
None is applicable to digital communication systems that have the potential to enjoy privacy protection offered by noisy wireless channels.
In~\cite{10041968}, devices quantize and compress their local model parameters with a required bit rate, resulting in a quantization error that helps the system to satisfy a predefined privacy level. However, the privacy-preserving capability of noisy communication channels was overlooked.
In the context of OTA-FL, Liu \etal~\cite{9252950} exploited the additive Gaussian communication noise to perform the analog waveforms modulated from uncoded model parameters as a privacy-preserving mechanism, and the transmit power was dynamically controlled to satisfy the privacy requirement, as illustrated in Fig.~\ref{fig:workflow}(b).  In~\cite{9693141}, the use of Gaussian communication and artificial noises was extended to decentralized OTA-FL systems. A prescribed privacy requirement was proven in peer-to-peer communication. However, both of the studies~\cite{9252950,9693141} relied on the prerequisite of OTA-FL, only applicable to Gaussian noises in the analog signal domain, and cannot be extended to digital systems.

\section{Preliminaries and System Model}\label{sec:pre}

In this section, we provide the preliminaries of the WFL model, $(\lambda,\epsilon)$-Rényi DP, and floating-point number structure, and the considered threat model in the context of WFL.

\subsection{WFL Model}

Consider a WFL system with a central server (\eg access point or AP) and $N$ devices. $\mathcal{N}=\{1,\cdots,N\}$ collects the indexes of the devices. Device $n\in\mathcal{N}$ owns a dataset ${{\cal D}}_n$ with $|{{\cal D}}_n|$ data samples. The local loss function of device $n$ is  
 \begin{align}
    F_n(\mathbf{w},\mathcal{D}_n)=\frac{1}{|{{\cal D}}_n|}{\sum}_{d\in{\cal D}_{n}} {f}(d,\mathbf{w}),
 \end{align}
where $\mathbf{w}\in \mathbb{R}^{M\times 1}$ is an $M$-dimensional model parameter vector, $d \in {{\cal D}}_n$ is the data sample, and ${f}(d,\mathbf{w})$ is the loss function concerning $d$ and~$\mathbf{w}$. 

The objective of the WFL system is to  
find the optimal model parameter vector $\mathbf{w}^*=\underset {\mathbf{w}\in \mathbb{R}^{M\times 1}}{ \arg\, \min}\,F(\mathbf{w})$, with  $F(\mathbf{w})$ being the global loss function which is the weighted sum of the local loss functions of all devices, as given by 
 \begin{align}
      F(\mathbf{w})={\sum}_{n\in\mathcal{N}} q_nF_n(\mathbf{w},\mathcal{D}_n),\label{FW}
 \end{align} 
where $q_n=\frac{|{{\cal D}}_n|}{{\sum}_{\forall n} |{{\cal D}}_n|}$ is the coefficient of device $n$.
Let $F_n^*$ and $F^*$ denote the minima of $F_n(\mathbf{w})$ and $F(\mathbf{w})$, respectively.

The WFL training consists of training iterations, with adequate synchronization across all devices.
Let $K$ denote the total number of communication rounds for each device, and $E$ denote the number of local training iterations per communication round. The training process has a total of $T=KE$ local training iterations. Let $\mathcal{I}_{E} = \{ 0, E, 2E, \cdots, KE\}$ collect the iterations when global aggregations take place.

At the beginning of the $k$-th communication round, the server broadcasts the latest global model to all clients. Each device $n$ trains its local model for $E$ local iterations based on the global model and its local data ${\cal D}_n$. The local models of the devices, $\boldsymbol{\omega}_{t,n},\,\forall n$, are updated in parallel when $t\notin\mathcal{I}_{E}$. 
  
When $t\in\mathcal{I}_{E}$, the devices first update their local models $\boldsymbol{\omega}_{t,n},\,\forall n$, 
then randomly flips the fraction bits of its $m$-th element, \ie  $\omega_{t,n,m}\in \boldsymbol{\omega}_{t,n}$ with $ m=1,\cdots,M$, to perturb $\boldsymbol{\omega}_{t,n}$ and uploads the perturbed local models through a noisy channel to the server.
The server aggregates the received local models, \ie $\tilde{\boldsymbol{\omega}}_{t,n},\,\forall n$, to update the global model, $\boldsymbol{\omega}_{t,\mathcal{G}}=\sum_{ n\in\mathcal{N}}q_n\tilde{\boldsymbol{\omega}}_{t,n}$, and broadcasts $\boldsymbol{\omega}_{t,\mathcal{G}}$ to all devices for further updating their local models.

Suppose full-batch gradient descent over training iterations, $\forall t=1,2,\cdots,T$. 
 The local model of device~$n$ at the  $(t+1)$-th (local) iteration is updated as 
\begin{align}\label{eq:local update 1}
    &\boldsymbol{\omega}_{t+1,n}=\mathbf{w}_{t,n}-\eta\cdot{\rm Clip}_{G}\left(\nabla F_{n}\left(\mathbf{w}_{t,n},\mathcal{D}_n\right)\right), \quad \forall t,n,
\end{align}
where \begin{align}
    \label{eq:local update 2}
    &\mathbf{w}_{t,n}=\left\{ \begin{array}{ll}
\boldsymbol{\omega}_{t,n}, & \text{if }t\notin\mathcal{I}_{E};\\\boldsymbol{\omega}_{t,\mathcal{G}}=
{\sum}_{n\in\mathcal{N}}q_n\tilde{\boldsymbol{\omega}}_{t,n}, & \text{if }t\in\mathcal{I}_{E},
\end{array}\right.
\end{align} is the input of the $(t+1)$-th (local) iteration, and $\boldsymbol{\omega}_{t+1,n}$ is the corresponding output of the local training.
Moreover, $\eta$ is the learning rate, $\nabla F_n({\mathbf{\cdot}})$ denotes the  
gradient of $F_n({\mathbf{\cdot}})$, ${\rm Clip}_{G}\left(\nabla F_{n}\left(\mathbf{w},\mathcal{D}_n\right)\right)=\frac{1}{|{{\cal D}}_n|}{\sum}_{d\in{\cal D}_{n}} \nabla{f}(d,\mathbf{w})\cdot \min \{1,\frac{G}{\Vert\nabla{f}(d,\mathbf{w})\Vert_2}\}$ is the clipped operator with the gradient clipping threshold $G$\footnote{ We adopt gradient clipping to ensure stable training and compatibility with practical ML algorithms. Gradient clipping has been extensively adopted in ML (with or without DP) to mitigate gradient explosion and preserve convergence; see \cite{pmlr-v202-koloskova23a} and references therein.}. 

Next, we pre-train the model and record the maxima of the $\ell_2$-norm and $\ell_\infty$-norm of the model parameter vector, denoted by $\nu_{2}=\underset{\forall t,n}{\max}\; \Vert\boldsymbol{\omega}_{t,n}\Vert_2$ and $\nu_{\infty}=\underset{\forall t,n}{\max}\; \Vert\boldsymbol{\omega}_{t,n}\Vert_{\infty}$, respectively, both of which increase with~$G$.

\subsection{Threat Model }\label{sec: threat model}

A critical risk arises from within the WFL system due to an honest but curious server. The server can legally access the local models trained by the devices, infer their training data, and compromise their privacy; \ie launching an inference attack~\cite{9500936}. This happens when the server has overlapping datasets with the devices and can analyze their local training inputs and outputs to infer their local data. 

Encryption cannot stop the server from launching inference attacks on the local models, but it can prevent outsiders from accessing the local models.
As a matter of fact, the transmissions between the devices and the server are typically encrypted.
Computationally efficient symmetric encryption, such as Advanced Encryption Standard (AES)~\cite{rao2017survey} and Data Encryption Standard (DES)~\cite{MATTA202111035}, can be reasonably employed to secure the model transmissions.  
While homomorphic encryption can prevent the server from deciphering the local models, it requires all devices to use the same private key and cannot stop them from eavesdropping on each other. Moreover, homomorphic encryption is typically computationally expensive and needs a trusted third party for key agreement~\cite{acar2018survey}. 

To this end, it is reasonable to consider the threat imposed by an honest but curious server. For illustration convenience, we assume the uploaded local models are not encrypted when elaborating on the proposed channel-native bit-flipping DP mechanism. Then, we show that the mechanism can apply when the local models are encrypted with symmetric keys.

\subsection{$(\lambda,\epsilon)$-Rényi DP}

In this paper, we consider the Rényi DP mechanism to protect the privacy of the devices. 
The definition of $(\lambda,\epsilon)$-Rényi DP is provided as follows.

\begin{define} [\textbf{$(\lambda,\epsilon)$-Rényi DP~\cite{Mironov_2017}}]
\textit{The randomized mechanism $\mathcal{M}:\mathcal{D}\mapsto\mathbb{R}$ satisfies $\epsilon$-Rényi DP of order $\lambda$, $(\lambda,\epsilon)$-Rényi, if, for any adjacent datasets $\mathcal{D}$ and $\mathcal{D}'$, it holds that }
\begin{align}
D_{\lambda }(\mathcal{M}(\mathcal{D})\|\mathcal{M}(\mathcal{D}'))\leq \epsilon,\notag
\end{align}
\textit{where the Rényi divergence of order $\lambda>1$ is defined as}
\begin{align}
	\!\!\!\!\!\! D_{\lambda }(\mathcal{M}(\mathcal{D})\|\mathcal{M}(\mathcal{D'}))\!\!\triangleq\!\!\frac{1}{\lambda\!-\!1}\ln\!\left[\mathbb{E}_{x\sim \mathcal{M}(\mathcal{D}')}\Big(\frac{\mathcal{M}(\mathcal{D})}{\mathcal{M}(\mathcal{D}')}\Big)^{\lambda}\right]\notag.
\end{align}
\end{define}
Note that the Rényi divergence recedes to the Kullback-Leibler divergence, \ie $D_{1}(\mathcal{M}(\mathcal{D})\|\mathcal{M}(\mathcal{D}'))=D_{\mathrm{KL}}(\mathcal{M}(\mathcal{D})\|\mathcal{M}(\mathcal{D}'))$, when $\lambda=1$.

\subsection{Floating-Point Number Format}

For illustration, we consider the single-precision floating-point format\footnote{IEEE~754 also defines other binary floating-point formats, \eg half-precision, double-precision, quadruple-precision, and octuple-precision. The proposed analysis and algorithm are applicable to all these formats.}, \textbf{binary32}, which is the most commonly used computer number representation that typically requires 32 bits of memory, and follows the IEEE~754 standard~\cite{8766229}. 
A \textbf{binary32} number comprises a sign bit, 8 exponent bits, and 23 fraction bits with an additional, implicit leading ``1'' bit to the left of the binary point~\cite{8766229}.
The (decimal) value of a \textbf{binary32} number, $a=(a_{31}a_{30}\cdots a_{0})_{\rm 2}$, is\footnote{The implicit ``1'' bit before the fractional bits is not explicitly stored in the memory to save space and remains ``1'' for normalized numbers~\cite{8766229}.}
\begin{align}
       \notag a&=(-1)^{a_{31}}\times2^{(a_{30}a_{29}\cdots a_{23})_{2}-127}\times(1.{a}_{22}\cdots{a}_{0})_{2}\\\notag&=(-1)^{a_{31}}\times2^{(a_{30}a_{29}\cdots a_{23})_{2}-127} \times\Big(1+{\sum} _{i=1}^{23}a_{23-i}2^{-i}\Big).
\end{align}

When the bits in the fraction part of a \textbf{binary32} floating-point number are randomly flipped with probability $p$, the mean and expectation of the resulting floating-point number in decimal are provided in the following lemma.

\begin{lemma}\label{lemma:expectation and Var}
    For a \textbf{binary32} number $a=(a_{31}a_{30}\cdots a_{0})_{\rm 2}$, after flipping its bits with a probability of $p$ in the fraction part, the mean of the output $\tilde{a}$ in decimal is given by 
    \begin{align}\label{expectation_of_a}
      \!\!\!\!  \mathbb{E}(\tilde{a})\!\! =\! \!(1\! \! -\!\!2p)a\!\!+\!\!(\!-\!1)^{a_{31}} 2^{(a_{30}a_{29}\dots a_{23})_{2}\!-\! 127}\Big(\!2p\!+\!\! \! 
      {\sum}_{i\!=\!1}^{23}2^{\! -\! i}p\!\Big)\!,
    \end{align}
   and the variance of $\tilde{a}$ is given by 
   \begin{align}\label{Var_of_a}
       \mathrm{Var}(\tilde{a})\!\!=\frac{(1\!\!-\!\!4^{\!-\!23})}{3}\!p(1\!\!-\!\!p)\!\left(\!2^{2(a_{30}a_{29}...a_{23})_{2}\!-\!254}\right).
   \end{align}
\end{lemma}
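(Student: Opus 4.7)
The plan is to compute $\mathbb{E}(\tilde a)$ and $\mathrm{Var}(\tilde a)$ by working directly from the \textbf{binary32} decoding formula and exploiting that only the 23 fraction bits are randomized, while the sign and exponent bits are fixed.

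First I would introduce, for each $i=1,\ldots,23$, the Bernoulli random variable $\tilde a_{23-i}$ obtained by flipping $a_{23-i}$ independently with probability $p$. A direct case check ($a_{23-i}=0$ versus $a_{23-i}=1$) shows that $\mathbb{E}[\tilde a_{23-i}]=(1-2p)\,a_{23-i}+p$ and $\mathrm{Var}(\tilde a_{23-i})=p(1-p)$, independent of the original bit. Then I would substitute these into the \textbf{binary32} decoding formula
\begin{align}
\tilde a=(-1)^{a_{31}}\,2^{(a_{30}\cdots a_{23})_2-127}\Bigl(1+{\sum}_{i=1}^{23}\tilde a_{23-i}\,2^{-i}\Bigr),\notag
\end{align}
since the sign and exponent bits are not perturbed.

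For the mean, linearity of expectation gives
\begin{align}
\mathbb{E}(\tilde a)=(-1)^{a_{31}}\,2^{e-127}\Bigl(1+(1-2p){\sum}_{i=1}^{23}a_{23-i}2^{-i}+p{\sum}_{i=1}^{23}2^{-i}\Bigr),\notag
\end{align}
with $e=(a_{30}\cdots a_{23})_2$. The key algebraic step is the identity
\begin{align}
1+(1-2p){\sum}_{i=1}^{23}a_{23-i}2^{-i}=(1-2p)\Bigl(1+{\sum}_{i=1}^{23}a_{23-i}2^{-i}\Bigr)+2p,\notag
\end{align}
which lets me pull out the factor $(1-2p)\cdot(-1)^{a_{31}}\,2^{e-127}(1+\sum a_{23-i}2^{-i})=(1-2p)a$ and collect the remaining terms into $(-1)^{a_{31}}2^{e-127}\bigl(2p+p\sum_{i=1}^{23}2^{-i}\bigr)$, yielding \eqref{expectation_of_a}.

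For the variance, the fixed scalar $(-1)^{a_{31}}2^{e-127}$ comes out squared, and mutual independence of the bit flips lets me sum per-bit variances:
\begin{align}
\mathrm{Var}(\tilde a)=2^{2(e-127)}\,p(1-p){\sum}_{i=1}^{23}2^{-2i}.\notag
\end{align}
Evaluating the geometric sum $\sum_{i=1}^{23}4^{-i}=(1-4^{-23})/3$ gives \eqref{Var_of_a}. The only place that requires any real care is the mean calculation, where the constant $1$ coming from the implicit leading bit does not get multiplied by $(1-2p)$; recognizing that the correction exactly equals $2p$ (added inside the exponent-scaled factor) is what produces the clean $(1-2p)a$ term in \eqref{expectation_of_a}. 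The variance computation is essentially routine once independence of the 23 Bernoulli flips is noted.
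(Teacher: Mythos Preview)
Your proposal is correct and follows essentially the same approach as the paper's proof: compute the per-bit mean $(1-2p)a_{23-i}+p$ and variance $p(1-p)$, substitute into the \textbf{binary32} decoding formula with fixed sign and exponent, and use linearity (respectively, independence plus the geometric sum $\sum_{i=1}^{23}4^{-i}$) to obtain \eqref{expectation_of_a} and \eqref{Var_of_a}. Your explicit algebraic identity isolating the $(1-2p)a$ term is exactly the step the paper performs when it rewrites $\mathbb{E}(\tilde a)$ after plugging in $\mathbb{E}(\tilde a_i)$.
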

\begin{proof}
    See \textbf{Appendix~\ref{proof_E_Var_of_a}}.
\end{proof}

\section{Channel-Native Privacy Protection for WFL}\label{model}
In this section, we delineate the new \textit{channel-native bit-flipping DP mechanism.} 
Considering numerical data is stored and transmitted in bits, \eg using the \textbf{binary32} floating-point format,
and noisy wireless channels cause bit errors, we protect the privacy of the model parameters by having their bits randomly flipped by the clients and noisy wireless channels. 

The new mechanism involves a new method for converting floating-point model parameters to fixed-point counterparts so that only the bits accounting for the fraction part of the model parameters can be meaningfully perturbed (as flipping the sign or exponent bits could dramatically distort the model parameters). The mechanism also involves a new way to evaluate the sensitivity regarding the number of different bits between fixed-point model parameters.

\subsection{Proposed Floating-Point-to-Fixed-Point Conversion}\label{subsection_map}

When $ t\in\mathcal{I}_E$, each element of $\boldsymbol{\omega}_{t,n}$ is  bounded by the $\ell_{\infty}$-norm 
 of $\boldsymbol{\omega}_{t,n}$, \ie  $|\omega_{t,n,m}|\leq\nu_{\infty}$, where $\omega_{t,n,m}$ is the $m$-th element of $\boldsymbol{\omega}_{t,n}$ with $m=1,\cdots,M$. 
The \(\ell_{\infty}\)-norm, \(\nu_{\infty}\), is expressed in the \textbf{binary32} format as \(\nu_{\infty} = (-1)^{c_{31}} \times 2^{(c_{30}c_{29}\dots c_{23})_{2} - 127} \times \left(1 + \sum_{i=1}^{23}1\cdot c_{23-i} \cdot 2^{-i}\right)\). Since \(\nu_{\infty} \geq |\omega_{t,n,m}| \geq 0\) is positive, the sign bit \(c_{31}\) is set to $0$. The upper bound of \(\nu_{\infty}\) is achieved when \(c_{22}c_{21}\dots c_{0} = 11\dots1\), resulting in \(1 + \sum_{i=1}^{23} 2^{-i} = 2 - 2^{-23} \leq 2\). By contrast, the lower bound is achieved when \(c_{22}c_{21}\dots c_{0} = 00\dots0\), yielding \(1 + \sum_{i=1}^{23} 0 \cdot 2^{-i} = 1\). Consequently, \(\nu_{\infty}\) is bounded as \(2^{(c_{30}c_{29}\dots c_{23})_{2} - 127} \leq \nu_{\infty} \leq 2^{(c_{30}c_{29}\dots c_{23})_{2} - 126}\). After clipping, the model parameter \(\boldsymbol{\omega}_{t,n}\) is restricted to the range \(\left[-2^{(c_{30}c_{29}\dots c_{23})_{2} - 126}, 2^{(c_{30}c_{29}\dots c_{23})_{2} - 126}\right]^{M \times 1}\).

A critical and novel step of the proposed approach is converting \(\boldsymbol{\omega}_{t,n}\triangleq\boldsymbol{\omega}^{\rm FP}_{t,n}\) from a floating-point number into a fixed-point number \(\boldsymbol{\omega}^{\rm FX}_{t,n}\) by adding a constant to each model element \(\omega_{t,n,m},\,\forall m\) of \(\boldsymbol{\omega}^{\rm FP}_{t,n}\). The constant is \(3 \times 2^{(c_{30}c_{29}\cdots c_{23})_{2} - 126}\), determined by the exponent part of the floating-point number \(\nu_{\infty}\), \ie \((c_{30}c_{29}\cdots c_{23})_{2}\). 
Adding this constant to every element of \(\boldsymbol{\omega}^{\rm FP}_{t,n}\) 
converts \(\boldsymbol{\omega}^{\rm FP}_{t,n}\) from a \textbf{binary32} floating-point number to a fixed-point number within the following range:
\begin{equation}
\begin{aligned} 
\boldsymbol{\omega}^{\rm FX}_{t,n}\triangleq  &\boldsymbol{\omega}^{\rm FP}_{t,n} + \{3 \times 2^{(c_{30}c_{29}...c_{23})_{2} - 126}\}^{M\times 1} \\
&\in [2^{(c_{30}c_{29}...c_{23})_{2} - 125}, 2^{(c_{30}c_{29}...c_{23})_{2} - 124}]^{M\times 1}.
\end{aligned}\label{fp2fx}
\end{equation}
By this means, all elements of $\boldsymbol{\omega}^{\rm FX}_{t,n},\forall n,t$ share the same positive ($+$) sign and exponent values of $(c_{30}c_{29}\cdots c_{23})_2+2$.

The proposed floating-point-to-fixed-point conversion eliminates the need to transmit the sign and exponent bits of the original floating-point number, \ie $c_{31}c_{30}\cdots c_{23}$, since those bits are converted to $(c_{30}c_{29}\cdots c_{23})_2+2$ with $c_{31}=0$, which are consistent across all devices and known to the server (as $c_{30}c_{29}...c_{23}$ depends only on the common $\nu_{\infty}$). 
To obtain the public $\nu_{\infty}$, the server can pre-train the FL model using publicly available datasets, compute the $\ell_{\infty}$-norm (denoted as $\nu_{\infty}$), and then broadcast this public information to all clients.
Each client can also pre-train its model locally and compute its own $\ell_{\infty}$-norm. If the public $\ell_{\infty}$-norm is larger than the client's local $\ell_{\infty}$-norm, the client decides to participate in the FL task. In other words, $\nu_{\infty}$ is publicly shared and does not leak privacy.
More importantly, withholding the sign and exponent bits prevents the sign and exponent bits of the floating-point model parameters from being flipped at the devices and during transmissions through noisy wireless channels, as flipping those bits could drastically distort the model parameters.

As a result of the floating-point-to-fixed-point conversion, each device $n$ only needs to encode the 
fraction part of the fixed-point model parameter {\(\boldsymbol{\omega}^{\rm FX}_{t,n}\)} into a bitstream, denoted by \(\boldsymbol{u}_{t,n} \in \mathbb{B}^{23M \times 1}\), where $\mathbb{B}=\{0,1\}$ is the set of binary numbers.
The number of bits to be transmitted is $23M$ per local model; cf. $32M$ under \textbf{binary32}.

\subsection{Bit Flipping and Model Transmission}

Two types of noises are experienced when the fixed-point bitstream of a local model, \(\boldsymbol{u}_{t,n} \in \mathbb{B}^{23M \times 1}\), is uploaded from a device $n$ to the server: The artificial bit-flipping noise imposed by the clients, and the inherent communication noise. Both artificial and communication noises are imposed into the bitstreams, $\boldsymbol{u}_{t,n},\,\forall n$, that are generated from the fraction parts of the transformed fixed-point model parameters. The floating-point-to-fixed-point conversion eliminates the need to transmit the sign and exponent bits of the floating-point numbers, ensuring those bits are unaffected by communication noise.

 \subsubsection{Artificial Noise}\label{artificial_noise}
 Each device $n$ perturbs the bitstream \(\boldsymbol{u}_{t,n}\) by flipping the bits at random with a pre-defined bit-flipping probability, denoted by $p_{t,n,\mathrm{A}}$. The resulting flipped bitstream is denoted as \begin{align}%
  \label{art_noise}    \tilde{\boldsymbol{u}}_{t,n}={\boldsymbol{u}}_{t,n}\otimes \boldsymbol{e}_{t,n,\mathrm{A}}, \quad\boldsymbol{e}_{t,n,\mathrm{A}}\sim \mathcal{B}(23M, 1-p_{t,n,\mathrm{A}}),%
 \end{align}%
 where $\mathcal{B}(23M, 1-p)$ is the binomial distribution with length $23M$ and probability $(1-p)$; 
and $\otimes$ is the element-wise exclusive-NOR operator.
 Device $n$ encodes and modulates $\tilde{\boldsymbol{u}}_{t,n}$ into $\boldsymbol{s}_{t,n}$, which is a sequence of symbols to be transmitted to the server. 
 The length of $\boldsymbol{s}_{t,n}$ depends on the modulation-and-coding scheme employed, more specifically, the transmit rate of the scheme, \eg $rM$ for 
 $r$-$\gamma$QAM with $r$ being the coding rate and $\gamma$ being the modulation order of QAM/QPSK signals\footnote{Channel coding is typically used to combat 
 fading and noise. 
 It is optional here as it weakens the privacy-preserving capability of wireless channels.}. $\boldsymbol{s}_{t,n}\in \mathbb{R}^{\frac{23M}{r\gamma}\times 1}$. 
 
  \subsubsection{Communication Noise}\label{sec_comm_noise}

Suppose the wireless channels experience block fading; i.e., the channel coefficients remain constant during a round but may vary independently between rounds. The system bandwidth is \(B_n\) (in Hz)  in the \(t\)-th communication round.
Consider a baseline setting with a single-antenna server and single-antenna devices. At the server, the received sequence of symbols from device \(n\) in the \(t\)-th communication round can be modeled as:
\[
\boldsymbol{y}_{t,n} = {h}_{t,n} \sqrt{\beta_{t,n}} \boldsymbol{s}_{t,n} + \boldsymbol{\theta}_{t,n},
\]
where
    \({h}_{t,n}\) is the complex-valued channel coefficient for device \(n\) in the \(t\)-th communication round,
    \(\beta_{t,n}\) denotes the transmit power of device \(n\) during the \(t\)-th round, 
    and
    \(\boldsymbol{\theta}_{t,n}\in \mathbb{R}^{\frac{23M}{r\gamma}\times 1}\) is the communication noise undergone at the server (or AP), with power spectral density \(N_0\) (Watts/Hz).

On receiving \(\boldsymbol{y}_{t,n}\), the server decodes and demodulates the signal to recover the transmitted signal sequence \(\boldsymbol{s}_{t,n}\). The demodulation process converts the received signal sequence into 
a bitstream, denoted by \(\hat{\boldsymbol{u}}_{t,n}\in \mathbb{B}^{23M \times 1}\).
Bit errors can occur due to communication noises and channel impairments. \(\tilde{\boldsymbol{u}}_{t,n}\) and \(\hat{\boldsymbol{u}}_{t,n}\) likely differ. 

The BER, denoted by \(p_{t,n,\mathrm{C}}\), can be obtained based on the prior knowledge of the channel parameters, \eg $h_{t,n},\beta_{t,n},N_{0}$ and $B_n$, as well as the coding-and-modulation schemes (\eg BPSK, QPSK, \etc.) and channel conditions (\eg AWGN, Rayleigh fading, \etc.)~\cite{simon2005digital}.  
Taking BPSK and QPSK over AWGN channel as an example, the signal-to-noise ratio (SNR) is $\gamma_{t,n}=\frac{(h_{t,n})^2\beta_{t,n}}{N_{0}B_n}$, and the communication BER is $p_{t,n,\mathrm{C}}=Q(\sqrt{2\gamma_{t,n}})$~\cite{goldsmith_2005},
where $Q(x)=\frac{1}{\sqrt{2\pi}}\int_{x}^{\infty}e^{-\frac{x^{2}}{2}}dx$ is the error function~\cite{simon2005digital}.
Note that our analysis of the proposed mechanism is not limited to a specific channel model. 

As a result, the received bitstream is denoted by
\begin{align}%
\label{com_noise}     \hat{\boldsymbol{u}}_{t,n}=\tilde{\boldsymbol{u}}_{t,n}\otimes \boldsymbol{e}_{t,n,\mathrm{C}}, \quad\boldsymbol{e}_{t,n,\mathrm{C}}\sim \mathcal{B}(23M, 1-p_{t,n,\mathrm{C}}).%
 \end{align}

\subsubsection{End-to-End Bit Error}

From \(\boldsymbol{u}_{t,n}\) to \(\hat{\boldsymbol{u}}_{t,n}\), a bit may flip up to twice due to the artificial bit-flipping noise and communication noise. A bit is erroneous if flipped once. As a result, the end-to-end BER, denoted by $p_{t,n}$, is given by
\begin{align}
p_{t,n} =& \; p_{t,n,\mathrm{C}}(1 - p_{t,n,\mathrm{A}}) + p_{t,n,\mathrm{A}}(1 - p_{t,n,\mathrm{C}}) \nonumber\\
       =& \; p_{t,n,\mathrm{C}} + p_{t,n,\mathrm{A}} - 2  p_{t,n,\mathrm{C}}  p_{t,n,\mathrm{A}}.
\end{align}
The BER from the artificial bit-flipping noise, \(p_{t,n,\mathrm{A}}\), is
\begin{align}
p_{t,n,\mathrm{A}} = \frac{p_{t,n} - p_{t,n,\mathrm{C}}}{1 - 2 p_{t,n,\mathrm{C}}}.\label{Art_ber}
\end{align}
Note that the required end-to-end BER, $p_{t,n}$, corresponds to the privacy requirement specified by the privacy budget of the devices. With a given modulation-and-coding scheme and channel condition, the BER stemming from the wireless transmissions of the local models, \ie $p_{t,n,C}$, can be measured through the reciprocity of wireless channels. Given $p_{t,n}$ and $p_{t,n,C}$, $p_{t,n,A}$ can be accordingly specified.

\smallskip
We now formalize the proposed bit-flipping mechanism with respect to the end-to-end bit-flipping probability (or, in other words, end-to-end BER), $p$, as follows.
\begin{define}[\textbf{Bit-flipping DP Mechanism}]\label{def-BFDP}
For a local dataset $\mathcal{D}$ and an encoded bit stream of a model parameter, \ie $\boldsymbol{u}(\cdot) \in \mathbb{B}^{23M\times 1}$, the randomized bit-flipping DP mechanism is defined as
\begin{equation}
    \mathcal{M}_{\rm BF}(\mathcal{D}, \boldsymbol{u}(\cdot), {\bar\kappa}_{\Delta\boldsymbol{\omega}_{\max,n}}, \lambda, \epsilon) = \boldsymbol{u}(\mathcal{D}) \otimes \mathcal{B}(23M, 1-p),
\end{equation}
where $\mathcal{B}(23M, 1-p)$ is the binomial distribution with length $23M$ and probability $(1-p)$; 
${\bar\kappa}_{\Delta\boldsymbol{\omega}_{\max,n}}$ is the expected bit-level distance under a given classical sensitivity $\Delta\boldsymbol{\omega}_{\max,n}$; $\otimes$ is the element-wise exclusive-NOR (XNOR) operator.
\end{define}

\subsection{Model Recovery}\label{sec:recovery}

With the prior knowledge about the sign and exponent parts of the fixed-pointed numbers, the server can first recover the local models after demodulating and decoding the bitstream \(\hat{\boldsymbol{u}}_{t,n}\). This is done by concatenating the sign and exponent parts of the fixed-point number to  \(\hat{\boldsymbol{u}}_{t,n}\), followed by subtracting $3\times2^{(c_{30}c_{29}\cdots c_{23})_{2}-126}$ from the decimal fixed-point elements to recover the floating-point local models. 
The recovered local model parameters of device $n$ are denoted by $\tilde{\boldsymbol{\omega}}_{t,n}$.

\subsection{Implementation Details}\label{algorithm_detail}
\textbf{Algorithm 1} summarizes the proposed channel-native bit-flipping DP mechanism.
Specifically, after finishing the local training in Line 6, the encoding process starts. As described in Lines 7 to~12, at the beginning of the encoding process, the clients convert the \textbf{binary32} floating-point model parameters $\boldsymbol{\omega}_{t,n}$ to fixed-point numbers by adding a constant generated from $\nu_{\infty}=(c_{31}c_{30}...c_{0})_{\rm 2}$; see \eqref{fp2fx}. Then, the clients remove the common sign part (i.e., $1$) and exponent part (i.e., $(c_{30}c_{29}\cdots c_{23})_2+2$) of the fixed-point numbers, and encode the fraction part into a bitstream $\boldsymbol{u}_{t,n}$ (see Section~\ref{subsection_map}).
The clients also estimate the BERs \(p_{t,n,\mathrm{A}}\); see Line 13.
Based on the BERs and privacy requirements, the clients calculate the artificial bit-flipping probabilities needed to supplement the inherent privacy provided by the channels; see \eqref{Art_ber}.

    Then, artificial and communication bit-flipping operations, performed by the clients and caused by the noisy channels, are sequentially added to the bitstream to enhance privacy. The former is done according to \eqref{art_noise} before transmission; see Line~14. The latter comes from the noisy communication channel according to \eqref{com_noise}; see Line 15.
       After receiving all bitstreams \(\{\hat{\boldsymbol{u}}_{t,n},\forall t,n\}\) from the clients, the server reconstructs the fixed-point numbers with the common sign and exponent parts (i.e.,~1 and $(c_{30}c_{29}\cdots c_{23})_2+2$), and subtracts the common value to recover the original \textbf{binary32}  model parameters \(\tilde{\boldsymbol{\omega}}_{t,n}\)  (see Sec. \ref{sec:recovery}), as shown in Line~18. 
    The received models are aggregated at the server using \eqref{eq:local update 2} in Line~19.

\section{Privacy Analysis}
\label{sec:privacy}

Our approach adapts standard privacy analysis to operate at the bit level. Specifically, the sensitivity of the model parameters, $\Delta\boldsymbol{\omega}_{\max}$, is first evaluated, as in conventional DP algorithms.
Then, the bit-level distances between parameter pairs achieving $\Delta\boldsymbol{\omega}_{\max}$ are analyzed statistically.
Finally, the Renyi divergence is analyzed based on these binary distances instead of the original parameter values.
This adaptation ensures privacy guarantees are upheld while capturing the nuances of parameter representation at the bit level.

\begin{algorithm}
\caption{Channel-native bit-flipping DP mechanism}\label{Algorithm 2}
\textbf{Initialization: }Clipping threshold~$G$; maxima of $\ell_{\infty}$-norm $\nu_{\infty}$; initial global model~$\boldsymbol{\omega}_{0,\mathcal{G}}$; number of local epochs~$E$; total number of training iterations $T$. 
\begin{algorithmic} [1]
\FOR{$t=1,\dots, T$}
\FOR{all devices $n\in\mathcal{N}$ in parallel} 
\IF{$t\notin \mathcal{I}_E$}
\STATE {Perform local training with (\ref{eq:local update 1}) and obtain~$\boldsymbol{\omega}_{t,n}$;}
\ELSE
\STATE Perform local training with (\ref{eq:local update 1}) and obtain~$\boldsymbol{\omega}_{t,n}$;
\STATE Encode a local model $\boldsymbol{\omega}_{t,n}$ into a bitstream~$\boldsymbol{u}_{t,n}$, as follows:
\STATE \textit{\small Initialize: $\boldsymbol{\omega}^{\mathrm{FP}}_{t,n}=\boldsymbol{\omega}_{t,n}$, $\nu_{\infty}$;}
\STATE \textit{\small Obtain the binary format of $\nu_{\infty}=(c_{31}c_{30}...c_{0})_{\rm 2}$;}
\STATE \textit{\small Perform floating-point-to-fixed-point conversion with \eqref{fp2fx} and obtain $\boldsymbol{\omega}^{\mathrm{FX}}_{t,n}$;}
\STATE \textit{\small Remove the common sign and exponent parts of $\boldsymbol{\omega}^{\mathrm{FX}}_{t,n}$, \ie $1$ and $(c_{30}c_{29}\cdots c_{23})_2+2$;}
\STATE \textit{\small Encode the fraction part of $\boldsymbol{\omega}^{\mathrm{FX}}_{t,n}$ and obtain $\boldsymbol{u}_{t,n}$;}  
\STATE Calculate the artificial BER $p_{t,n,\mathrm{A}}$;
\STATE Add artificial bit-flipping noise to the bitstream $\boldsymbol{u}_{t,n}$ to obtain $\tilde{\boldsymbol{u}}_{t,n}$, see \eqref{art_noise};
\STATE Modulate and upload $\tilde{\boldsymbol{u}}_{t,n}$ to the server through the noisy wireless channel, see \eqref{com_noise};
\ENDIF
\ENDFOR
\STATE  The server receives signals from all devices, obtains $\hat{\boldsymbol{u}}_{t,n}$, and recovers $\tilde{\boldsymbol{\omega}}_{t,n}$.
\STATE The server aggregates all local models and obtains~$\boldsymbol{\omega}_{t,\mathcal{G}}$, see \eqref{eq:local update 2}. 
\ENDFOR
\end{algorithmic}  
\end{algorithm}

\subsection{Bit-Level Distance}\label{Section:sensitivity}

When \(t\in\mathcal{I}_E\), a certain level of superimposed artificial noise and communication noise, resulting in the end-to-end BER \(p_{t,n}\), is imposed to the model parameter \(\boldsymbol{\omega}_{t,n}(\mathcal{D}_n)\) of every device \(n\). This results in a perturbed model parameter \(\tilde{\boldsymbol{\omega}}_{t,n}(\mathcal{D}_n) = \boldsymbol{\omega}_{t,n}(\mathcal{D}_n) + \boldsymbol{z}_{t,n}\) at the server.

Suppose that \(\boldsymbol{\omega}_{t,n}(\mathcal{D}_n)\) and \(\boldsymbol{\omega}_{t,n}(\mathcal{D}'_n)\) are derived from two adjacent datasets \(\mathcal{D}_n\) and \(\mathcal{D}'_n = \mathcal{D}_n \cup \{d'\}\), 
respectively. $d'$ is a data sample. 
To measure the sensitivity between these two sets of model parameters, we use the \(\ell_2\)-norm. The (classical) sensitivity concerning function $\boldsymbol{\omega}_{t,n}(\mathcal{D}_n) $ is given by
\begin{align}\label{eq: classical sensitivity}
\!\!\!\!\Delta \boldsymbol{\omega}_{\max,n}\!\! =\!\! \underset{\forall t, \mathcal{D}_n, \mathcal{D}'_n}{\max} \Vert \boldsymbol{\omega}_{t,n}(\mathcal{D}_n) - \boldsymbol{\omega}_{t,n}(\mathcal{D}'_n) \Vert_2
=\frac{2{\eta G}}{|\mathcal{D}_n|},
\end{align}
which is obtained from
\setlength{\abovedisplayskip}{4pt}
\setlength{\belowdisplayskip}{4pt}
\begin{subequations}
\begin{align}
    \notag &\big\Vert \boldsymbol{\omega}_{t,n}(\mathcal{D}_n) - \boldsymbol{\omega}_{t,n}(\mathcal{D}'_n) \big\Vert_2\\
    &= \Big\Vert \left( \tfrac{\eta}{|{\cal D}_n|} - \tfrac{\eta}{|{\cal D}_n'|} \right)
    {\sum}_{d\in{\cal D}_{n}} \nabla f(d,\mathbf{w})\!\cdot\!\min\!\left\{1,\tfrac{G}{\|\nabla f(d,\mathbf{w})\|_2}\right\} \notag\\
    &\quad - \tfrac{\eta}{|{\cal D}_n'|} \nabla f(d',\mathbf{w})\!\cdot\!\min\!\left\{1,\tfrac{G}{\|\nabla f(d',\mathbf{w})\|_2}\right\} \Big\Vert_2 \label{sensitivity_b}\\
    &\leq \left\Vert \tfrac{\eta}{|{\cal D}_n|(|{\cal D}_n|+1)}
    {\sum}_{d\in{\cal D}_{n}} \nabla f(d,\mathbf{w})\!\cdot\!\min\!\left\{1,\tfrac{G}{\|\nabla f(d,\mathbf{w})\|_2}\right\} \right\Vert_2 \notag\\
    &\quad + \left\Vert \tfrac{\eta}{|{\cal D}_n|+1} \nabla f(d',\mathbf{w})\!\cdot\!\min\!\left\{1,\tfrac{G}{\|\nabla f(d',\mathbf{w})\|_2}\right\} \right\Vert_2 \label{sensitivity_c}\\
    &\leq \tfrac{\eta}{|{\cal D}_n|(|{\cal D}_n|+1)} {\sum}_{d\in{\cal D}_{n}} 
    \left\Vert \nabla f(d,\mathbf{w})\!\cdot\!\min\!\left\{1,\tfrac{G}{\|\nabla f(d,\mathbf{w})\|_2}\right\} \right\Vert_2 \notag\\
    &\quad + \tfrac{\eta}{|{\cal D}_n|+1} 
    \left\Vert \nabla f(d',\mathbf{w})\!\cdot\!\min\!\left\{1,\tfrac{G}{\|\nabla f(d',\mathbf{w})\|_2}\right\} \right\Vert_2 \label{sensitivity_d}\\
    &\leq \tfrac{2\eta G}{|\mathcal{D}_n|}, \label{sensitivity_e}
\end{align}%
\end{subequations}%
where 
\eqref{sensitivity_b} is based on \eqref{eq:local update 1} and reorganization, \eqref{sensitivity_c} is due to the homogeneity of $\ell_2$-norm and that $|{{\cal D}}_n'|=|{{\cal D}}_n|+1$, \eqref{sensitivity_d} is due to the homogeneity of $\ell_2$-norm, and \eqref{sensitivity_e} is based on $\frac{1}{|{{\cal D}}_n|+1}\leq\frac{1}{|{{\cal D}}_n|}$ and $\big \Vert\nabla{f}(d,\mathbf{w})\cdot \min \{1,\frac{G}{\Vert\nabla{f}(d,\mathbf{w})\Vert_2}\}\big\Vert_2\leq G$.

 \subsubsection{Definition of Expected Bit-Level Distance}\label{subsubsection_def_sensitivity}
Since we consider perturbation upon the bitstreams \(\boldsymbol{u}_{t,n}\) (through artificial bit-flipping and communication noises), which is also a function of \(\mathcal{D}_n\), \ie \(\boldsymbol{u}_{t,n}=\boldsymbol{u}_{t,n}(\mathcal{D}_n)\), we are interested in the difference between the pair of bitstreams corresponding to~$\Delta \boldsymbol{\omega}_{\max,n} $ {before the perturbation}. It is important to note that the difference is non-deterministic and influenced by various sources of inherent randomness, including dropout mechanisms in the CNN model, random seeds, and the stochastic padding algorithm defined in the IEEE~754 standard. We amount to the expectation of the bit-level difference corresponding to $\Delta \boldsymbol{\omega}_{\max,n} $ and accordingly define a new metric, which will be employed in the privacy analysis of the proposed mechanism in Section~\ref{renyi_DP_privacy_loss}.

Consider two bitstreams \(\boldsymbol{u}_{t,n}(\mathcal{D}_n)\) and \(\boldsymbol{u}_{t,n}(\mathcal{D}'_n)\), each derived from the respective model parameters,
$\boldsymbol{\omega}_{t,n}(\mathcal{D}_n)$ and $\boldsymbol{\omega}_{t,n}(\mathcal{D}'_n)$, with their difference equal to the classical sensitivity, $\Delta \boldsymbol{\omega}_{\max,n}\equiv \Vert \boldsymbol{\omega}_{t,n}(\mathcal{D}_n)-\boldsymbol{\omega}_{t,n}(\mathcal{D}'_n)\Vert_2$.
 The bit-level distance between $\boldsymbol{u}_{t,n}\triangleq\boldsymbol{u}_{t,n}(\mathcal{D})$ and $\boldsymbol{u}_{t,n}'\triangleq\boldsymbol{u}_{t,n}(\mathcal{D}')$ is defined by \begin{align}
\notag\kappa(\boldsymbol{\omega}_{t,n}(\mathcal{D}_n),\boldsymbol{\omega}_{t,n}(\mathcal{D}'_n))\triangleq \Vert \boldsymbol{u}_{t,n}(\mathcal{D}_n) - \boldsymbol{u}_{t,n}(\mathcal{D}'_n) \Vert_1  
\\={\sum}_{k=1}^{23M}2^{(k{\,\rm \,mod} 23)-23}|u_k-u_k'|.\notag
\end{align}

Due to inherent randomness, the bit-level distance between $\boldsymbol{u}_{t,n}$ and $\boldsymbol{u}_{t,n}'$ is non-deterministic, even with fixed $\mathcal{D}$, $\mathcal{D}'$, and $\boldsymbol{\omega}_{t-1,n}$. It may occasionally reach the maximum value—an extreme and random case where all bits differ—yet the same configurations can still yield varying distances across repeated experiments.

Consider a single bit ${u}_k$, such as the $k$-th bit in the bitstream $\boldsymbol{u}_{t,n} \in \mathbb{B}^{23M \times 1}$, which is also the $j$-th fraction bit of a real number, where $j=(k {\,\rm \,mod}\, 23)$. Due to inherent randomness, the bit-level difference between ${u}_k$ and ${u}_k'$ attains its maximum value (i.e., $2^{(k{\,\rm \,mod}\, 23)-23}$) with probability $q_k$, and equals zero with probability $1 - q_k$. The expectation of its bit-level distance between ${u}_k$ and ${u}_k'$ is $2^{(k{\,\rm \,mod}\, 23)-23}q_k$.

We now introduce a new metric, referred to as the sum of the expected bit-level distance of all bits between $\boldsymbol{u}_{t,n}$ and $\boldsymbol{u}'_{t,n}$ under a given classical sensitivity $\Delta \boldsymbol{\omega}_{\max,n}$, denoted as 
\begin{equation}
{\bar\kappa}_{\Delta\boldsymbol{\omega}_{\max,n} } =\mathbb{E}(\kappa)={\sum}_{k=1}^{23M}2^{(k{\,\rm \,mod} 23)-23}q_k.\label{expected_bit_level_distance}%
\end{equation}%
\begin{remark}~\label{remark expectation}
     While the bit-level distances of these parameter pairs can vary significantly, ${\bar\kappa}_{\Delta\boldsymbol{\omega}_{\max,n} }$ is introduced to characterize this variability. In the next section, we leverage Rényi divergence to quantify the privacy leakage risk, which has been proven to positively correlate with ${\bar\kappa}_{\Delta\boldsymbol{\omega}_{\max,n}}$.
     
Note that directly using the maximum bit-level distance may overestimate the privacy risk, as it fails to capture the full distributional characteristics. This coarse approximation could lead to excessive differential privacy noise injection, thereby impairing the utility of the federated learning model.
\end{remark}
\begin{remark}\label{remark kappa}
For two bitstreams, the bit-level distance $\kappa$ differs from the classical $\ell_1$-norm of the distance between the bitstreams. 
Consider two floating-point numbers $\boldsymbol{a}$ and $\boldsymbol{b}$ with the same sign and exponent parts, and different fraction parts, denoted by $(a_{22}\cdots a_{0})_2$ and $(b_{22}\cdots b_{0})_2$, respectively. 

The $\ell_1$-norm is $|\sum_{j=0}^{22}2^{j-23}(a_{j}-b_{j})|$. By contrast, the bit-level distance is $\kappa=\sum_{j=0}^{22}2^{j-23}|a_{j}-b_{j}|$, and is no smaller than the $\ell_1$-norm.
As a consequence, given $\Delta\boldsymbol{\omega}_{\max,n}$, ${\bar\kappa}_{\Delta\boldsymbol{\omega}_{\max,n} }$ is no smaller than the expectation of the $\ell_1$-norm of the distance between any two bitstreams.
\end{remark}

\subsubsection{Evaluation of ${\bar\kappa}_{\Delta\boldsymbol{\omega}_{\max,n} }$}
To evaluate ${\bar\kappa}_{\Delta\boldsymbol{\omega}_{\max,n} }$, we define $\boldsymbol{x} \in \mathbb{R}^{M\times1}$ with  
$\boldsymbol{x}= \boldsymbol{\omega}_{t,n}(\mathcal{D}_n)-\boldsymbol{\omega}_{t,n}(\mathcal{D}'_n)$. 
 Let $x_m$ denote the $m$-th element of $\boldsymbol{x}$.
Hence, $\Vert\boldsymbol{x}\Vert_2=\sqrt{\sum_{m=1}^{M}|x_m|^2}=\Delta\boldsymbol{\omega}_{\max,n}$. 
\begin{figure}
    \centering
    \includegraphics[scale=0.7]{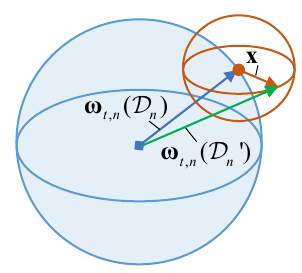}
    \caption{An illustration of the model parameters $\boldsymbol{\omega}_{t,n}(\mathcal{D}_n)$, $\boldsymbol{\omega}_{t,n}(\mathcal{D}_n')$, and $\boldsymbol{x}$. The blue sphere denotes the domain of the model parameters $\boldsymbol{\omega}_{t,n}(\mathcal{D}_n)$. The red sphere denotes the domain of $\boldsymbol{x}$.}
    \label{fig:sensitivity_domain}
\end{figure}
By definition, ${\bar\kappa}_{\Delta\boldsymbol{\omega}_{\max,n} }$
can be written as
    \begin{align}
&\!\!\!\!{\bar\kappa}_{\Delta\boldsymbol{\omega}_{\max,n} }\!\!=\!\!\frac{\int\!\!\cdots\!\!\int_{ \mathcal{R}}\!\!\kappa(\boldsymbol{\omega}\!\!+\!\!\boldsymbol{x},\boldsymbol{\omega})\,d\omega_{1}\cdots d\omega_{M}d x_{1}\cdots  d x_{M}}{\int\cdots\int_{ \mathcal{R}} 1\,d\omega_{1}\!\cdots\! d\omega_{M} d x_{1}\!\cdots \!d x_{M}},\label{binary_sensitivity}
\end{align}
where $\mathcal{R} = \{(\omega_{1},\cdots,\omega_{M},x_1,\cdots,x_M)\}$ stands for the $2M$-dimensional vector space satisfying
\begin{itemize}
    \item The $m$-th element, $\omega_m$, of the model parameter $\boldsymbol{\omega}$ is in the range $\omega_m\in[-2^{(c_{30}c_{29}...c_{23})_{2}-126},2^{(c_{30}c_{29}...c_{23})_{2}-126}],\,\forall m$;
    \item  $\Vert\boldsymbol{x}\Vert_2=\Delta\boldsymbol{\omega}_{\max,n}$ achieves the classical sensitivity.
\end{itemize}
The nominator on the right-hand side (RHS) of~\eqref{binary_sensitivity} gives the volume of $\mathcal{R}$, and its reciprocal specifies the density, i.e., the probability distribution function (PDF), of~$\mathcal{R}$.

Fig.~\ref{fig:sensitivity_domain} illustrates the defined vector space $\mathcal{R}$, using spheres to conceptualize regions defined by the \(\ell_1\)-norm for easier visualization. The volume of $\mathcal{R}$ can be visualized as the total area of all the red spheres, each centered at every possible pair of vectors $\boldsymbol{\omega}$ and~$\boldsymbol{x}$.
According to~\eqref{eq: classical sensitivity}, $\sqrt{\sum_{m=1}^{M}|x_m|^2}= \frac{2{\eta G}}{|\mathcal{D}_n|}$. ${\bar\kappa}_{\Delta\boldsymbol{\omega}_{\max,n} }$ can be evaluated empirically at each device $n$ based on its dataset $\mathcal{D}_n$; see Section~\ref{section:binary_sensitivity}.

\subsection{End-to-End \((\lambda,\epsilon)\)-Rényi DP}\label{renyi_DP_privacy_loss}

{  At the server, $\hat{\boldsymbol{u}}_{t,n}(\mathcal{D}_n)$ and $\hat{\boldsymbol{u}}_{t,n}(\mathcal{D}'_n)$ are the received bitstreams perturbed under the proposed bit-flipping DP mechanism with end-to-end BER $p_{t,n}$.  The following theorem evaluates the end-to-end BER, denoted by $p_{t,n}$, ensuring that applying the bit-flipping DP mechanism to the binary model parameters of client $n$ at the $t$-th round satisfies $(\lambda, \frac{\epsilon}{K})$-Rényi DP. Furthermore, the proposed mechanism satisfies $(\lambda, {\epsilon})$-Rényi DP over $K$ communication rounds.

\begin{theo}\label{theo1_renyi}
With the expected bit-level distance ${\bar\kappa}_{\Delta\boldsymbol{\omega}_{\max,n}}$ defined, applying the proposed bit-flipping mechanism $\mathcal{M}_{\rm BF}(\boldsymbol{u}(\mathcal{D}_n),p_{t,n})$ defined in \textbf{Definition~\ref{def-BFDP}} for $K$ communication rounds satisfies $(\lambda,\epsilon)$-Rényi DP when the end-to-end BER is $p_{t,n}\geq\left(1 +
\Big( \tfrac{(\lambda-1)\epsilon}{K {\bar\kappa}_{\Delta\boldsymbol{\omega}_{\max,n}}} \right)^{\frac{1}{\lambda-1}}
\Big)^{-1}$.
\end{theo}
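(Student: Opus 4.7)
The plan is to bound the single-round Rényi divergence of the channel-native bit-flipping mechanism by a quantity that is linear in the expected bit-level distance $\bar\kappa_{\Delta\boldsymbol{\omega}_{\max,n}}$, use the sequential composition of Rényi DP across the $K$ rounds, and then invert the resulting inequality to obtain the stated lower bound on $p_{t,n}$.

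First, I would exploit independence of the composite bit-flipping noise across the $23M$ bit positions: because the artificial flip and the channel flip are independent per position and each Bernoulli, the end-to-end noise on each bit is Bernoulli with probability $p_{t,n}$, and the joint output factorizes. By the additivity of Rényi divergence over independent coordinates, one can write
\[
D_\lambda\!\left(\mathcal{M}_{\rm BF}(\boldsymbol{u}_{t,n}(\mathcal{D}_n))\,\big\|\,\mathcal{M}_{\rm BF}(\boldsymbol{u}_{t,n}(\mathcal{D}'_n))\right)=\sum_{k=1}^{23M} D_\lambda^{(k)},
\]
where $D_\lambda^{(k)}$ vanishes when bits $u_k$ and $u_k'$ agree and otherwise equals a closed-form expression depending only on $p_{t,n}$ and $\lambda$. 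I would then combine this with the bit-significance weight $2^{(k\,\mathrm{mod}\,23)-23}$—which tracks how bit $k$ contributes to the decoded fixed-point value—via an elementary logarithmic inequality (e.g.\ $\tfrac{1}{\lambda-1}\ln(1+x)\le \tfrac{x}{\lambda-1}$, applied after factoring the dominant ratio $(1-p_{t,n})/p_{t,n}$) to obtain a per-position bound of the form $\tfrac{2^{(k\,\mathrm{mod}\,23)-23}}{\lambda-1}\bigl(\tfrac{1-p_{t,n}}{p_{t,n}}\bigr)^{\lambda-1}$.

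Next, I would take expectation over the inherent training randomness (dropout, random seeds, IEEE~754 padding) that determines whether position $k$ actually differs, identifying $q_k$ as the per-position disagreement probability. The definition \eqref{expected_bit_level_distance} then collapses the weighted sum $\sum_k 2^{(k\,\mathrm{mod}\,23)-23}q_k$ into $\bar\kappa_{\Delta\boldsymbol{\omega}_{\max,n}}$, yielding the single-round bound
\[
D_\lambda\le \tfrac{\bar\kappa_{\Delta\boldsymbol{\omega}_{\max,n}}}{\lambda-1}\Bigl(\tfrac{1-p_{t,n}}{p_{t,n}}\Bigr)^{\lambda-1}.
\]
Requiring this bound to be at most $\epsilon/K$ and applying the sequential composition property of Rényi DP (which simply adds the per-round $\epsilon$'s) yields $(\lambda,\epsilon)$-Rényi DP over $K$ rounds. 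Solving $\tfrac{1}{\lambda-1}\bigl(\tfrac{1-p}{p}\bigr)^{\lambda-1}\le \tfrac{\epsilon}{K\,\bar\kappa_{\Delta\boldsymbol{\omega}_{\max,n}}}$ for $p$ algebraically produces $\tfrac{1-p}{p}\le\bigl(\tfrac{(\lambda-1)\epsilon}{K\bar\kappa_{\Delta\boldsymbol{\omega}_{\max,n}}}\bigr)^{1/(\lambda-1)}$, which rearranges to the stated threshold.

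The step I expect to be most delicate is making the positional weight $2^{(k\,\mathrm{mod}\,23)-23}$ appear inside the per-bit divergence, because the raw Rényi divergence of a single Bernoulli flip is position-independent. Producing the weight requires evaluating the divergence on the decoded fixed-point contribution of bit $k$ rather than on the raw bit, and simultaneously justifying the exchange of a worst-case $\ell_2$-sensitivity for an expectation-based $\bar\kappa$ while still asserting a bona fide Rényi DP guarantee of the form stated in the paper's definition. Aligning these two ingredients—the bit-weighted sensitivity and the expectation-based accounting—so that they produce precisely the factor $\tfrac{1}{\lambda-1}\bigl(\tfrac{1-p}{p}\bigr)^{\lambda-1}$ is where the main bookkeeping difficulty will lie.
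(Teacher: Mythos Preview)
Your high-level architecture---per-bit decomposition of the Rényi divergence, summing weighted contributions to recover $\bar\kappa_{\Delta\boldsymbol{\omega}_{\max,n}}$, composing over $K$ rounds, then inverting---matches the paper. The execution of the two steps you yourself flag as delicate, however, differs from the paper's in ways worth noting.

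\textbf{How $q_k$ enters.} You propose to compute the per-bit divergence conditional on whether $u_k\neq u_k'$ and then average over the training randomness that governs $q_k$, i.e.\ bound $\mathbb{E}_{q}[D_\lambda]$. The paper instead folds $q_k$ into the output distribution \emph{before} computing the divergence: it introduces an auxiliary bit $b_k$ that always disagrees with $u_k'$ and writes $Q_{k,u_k}=q_k\,Q_{k,b_k}+(1-q_k)\,Q_{k,u_k'}$ as a mixture, then bounds $D_\lambda(Q_{k,u_k}\|Q_{k,u_k'})$ directly. These are not the same operation (divergence of a mixture versus expected divergence), though both relax worst-case DP to the expectation-based quantity $\bar\kappa$; the mixture formulation is what lets the paper extract the factor $q_k$ linearly without a separate Jensen step.

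\textbf{How the bit weight enters.} You correctly identify that the raw single-bit Bernoulli divergence is position-independent, and you are vague about how a logarithmic inequality would manufacture the factor $2^{(k\,\mathrm{mod}\,23)-23}$. The paper resolves this with a concrete construction you have not sketched: it views the decoded fixed-point fraction as lying in $[0,1)$, partitions $[0,1)$ into $2^{23-(k\,\mathrm{mod}\,23)}$ intervals at the resolution of bit $k$, and observes that flipping $u_k$ moves the decoded value between two distinguished intervals while the remaining intervals carry identical mass under both hypotheses. This yields an explicit three-outcome PMF in which the weight $2^{(k\,\mathrm{mod}\,23)-23}$ appears as the probability mass on each distinguished interval. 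Computing the $\lambda$-moment of the likelihood ratio on this three-event space, together with an algebraic inequality (stated and verified numerically in the paper) of the form
\[
p\bigl((1-q_k)+q_k\tfrac{1-p}{p}\bigr)^{\lambda}+(1-p)\bigl((1-q_k)+q_k\tfrac{p}{1-p}\bigr)^{\lambda}\le (1-q_k)+q_k\bigl(\tfrac{1-p}{p}\bigr)^{\lambda-1},
\]
gives $D_\lambda^{(k)}\le \tfrac{q_k\,2^{(k\,\mathrm{mod}\,23)-23}}{\lambda-1}\bigl((\tfrac{1-p}{p})^{\lambda-1}-1\bigr)$. Summation over $k$ and the definition of $\bar\kappa$ then finish exactly as you outline. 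In short, your plan is structurally sound, but the paper's interval-partition device and mixture formulation are the specific mechanisms that close the gap you anticipated.
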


\begin{proof}
   The detailed proof can be found in \textbf{Appendix~\ref{proof_theo1}}.
\end{proof}

In Theorem \ref{theo1_renyi}, we analyze the privacy leakage risk using the Rényi divergence, based on the distributions of the received bitstreams corresponding to two adjacent datasets. To quantify the influence of bit-level deviations, we introduce the expectation-based metric ${\bar\kappa}_{\Delta\boldsymbol{\omega}_{\max,n}}$, which captures the joint impact of the maximum bit-level distance and its probability on privacy leakage. We further demonstrate that the Rényi divergence is positively correlated to ${\bar\kappa}_{\Delta\boldsymbol{\omega}_{\max,n}}$, by establishing an upper bound involving the one-round privacy budget $\frac{\epsilon}{K}$. This facilitates analyzing the end-to-end BER~$p_{t,n}$.}

\begin{remark}
The above analysis does not consider encryption. Nevertheless, it can be extended to systems where devices encrypt their local models before bit-flipping and transmissions.

In the case of non-diffusion encryption methods, \eg certain stream ciphers and the one-time pad, each bit or byte of the plaintext is directly transformed through linear operations, e.g., XOR. Bit errors in the ciphertext mirror those in the corresponding positions of the decrypted plaintext. The BER of the ciphertext is consistent with the BER of the decrypted plaintext, i.e., $p_{t,n}\triangleq p^{\rm PT}_{t,n} = p^{\rm CT}_{t,n}$ with $p^{\rm PT}_{t,n}$ and $p^{\rm CT}_{t,n}$ being the BERs of the plaintext and ciphertext, respectively.

In the case of diffusion encryption methods, \eg AES and DES, a fixed block size of 128 or 64 bits is typically used. The methods are engineered to propagate the impact of bit errors across entire blocks. Even a single bit error can corrupt an entire block, leading to widespread bit errors. The BER of the plaintext within the block is expected to be $0.5$ to make the original plaintext irrecoverable. 
Taking AES with a block size of 128 bits as an example. When the BER of the ciphertext is $p^{\mathrm{CT}}_{t,n}$, the block error rate of a received block is $1-(1-p^{\mathrm{CT}}_{t,n})^{128}$. Then, the BER of the decrypted plaintext is $0.5$ when one or more bit errors are in the received block of the ciphertext. When there is no error in the ciphertext block, the BER of the decrypted plaintext is $0$. Based on the Law of Total Probability, the BER of the decrypted plaintext $p_{t,n}$ is 
\begin{align}
p_{t,n}\!\triangleq \!p^{\rm PT}_{t,n}\!=\!(1\!-\!(1\!-\!p^{\mathrm{CT}}_{t,n})^{128})\!\times \!0.5\!+\!(1\!-\!p^{\mathrm{CT}}_{t,n})^{128}\!\times \!0.
\end{align}
According to~\eqref{eq:Joint_BER_preserving_privacy}, we have $p^{\mathrm{CT}}_{t,n}=1-(1-2p_{t,n})^{\frac{1}{128}}$.

In both cases, the BERs of the decrypted plaintext, $p^{\rm PT}_{t,n}$, can be evaluated and substituted into~\eqref{eq:Joint_BER_preserving_privacy} to help specify the end-to-end BER of the proposed bit-flipping DP mechanism.
\end{remark}

\section{Convergence Analysis} 
\label{sec:convergence}

In this section, we analyze the convergence of WFL under the proposed bit-flipping mechanism. 
Due to the artificial bit-flipping noise and communication noise, $\tilde{\boldsymbol{\omega}}_{t,n},\,\forall n$ is likely to be imprecise. We define $\boldsymbol{z}_{t,n}$ as the local bias between the locally updated model of device $n$ and the corresponding model recovered at the server, \ie $\boldsymbol{\omega}_{t,n}$ and $\tilde{\boldsymbol{\omega}}_{t,n}$: 
\begin{align}
   \boldsymbol{z}_{t,n}= \tilde{\boldsymbol{\omega}}_{t,n}-\boldsymbol{\omega}_{t,n}.\label{eq:z_tn}
\end{align}
The global bias between the ideal global model, \ie $
\sum_{n\in\mathcal{N}}q_n\boldsymbol{\omega}_{t,n}$, and the corresponding imperfect global model, \ie $\boldsymbol{\omega}_{t,\mathcal{G}}\!=\!
\sum_{n\in\mathcal{N}}q_n\tilde{\boldsymbol{\omega}}_{t,n}$, can be defined as
\begin{align}
    \boldsymbol{z}_{t,G}={\sum}_{n\in\mathcal{N}}q_n(\tilde{\boldsymbol{\omega}}_{t,n}-\boldsymbol{\omega}_{t,n})={\sum}_{n\in\mathcal{N}}q_n\boldsymbol{z}_{t,n}.\label{eq:z_tG}
\end{align}

\begin{lemma}\label{lemma:z}With the mean and variance of a bitstream with randomly flipped bits 
given in \textbf{Lemma~\ref{lemma:expectation and Var}}, the local and global biases, $\boldsymbol{z}_{t,n},\,\forall n$ and $\boldsymbol{z}_{t,\mathcal{G}}$, have the following properties: 
\begin{enumerate}
    \item The expectation and variance of $\boldsymbol{z}_{t,n}$ are given by  
\begin{align} \!\!\!\!\!\!\!\!\!\!\!\!\!\!\!\!\!\!\!\!\!\!\!\!\mathbb{E}(\boldsymbol{z}_{t,n})\approx &2p_{t,n}{\boldsymbol{\omega}}_{t,n};\label{eq:E_z_tn}\\
 \!\!\!\!\!\!\!\!\!\!\!\!\!\!\!\!\!\!\!\!\!\!\!\! \mathbb{E}(\Vert \boldsymbol{z}_{t,n}\!\!-\!\!\mathbb{E}(\boldsymbol{z}_{t,n})\Vert_2^2) 
\!=&\!\frac{(1\!\!-\!\!4^{-23})}{3}Mp_{t,n}(1\!\!-\!\!p_{t,n})\times \nonumber \\
&2^{2(c_{30}c_{29}\cdots c_{23})_{2}-250}.\label{eq:Var_z_tn}
\end{align}

\item The $\ell_2$-norm of the global bias $\boldsymbol{z}_{t,\mathcal{G}}$, \ie $\Vert\boldsymbol{z}_{t,\mathcal{G}}\Vert^2$, is upper bounded by
\begin{align}
\notag
\!\!\!\!\!\!\!\!\!\!\!\!\mathbb{E}( \Vert\boldsymbol{z}_{t,\mathcal{G}}\Vert_2^2)\leq &\frac{(1\!\!-\!\!4^{\!-\!23})M}{3}{\sum}_{n\in\mathcal{N}}q_n^2 p_{t,n}(1\!\!-\!\!p_{t,n})\times \nonumber\\
&2^{2(c_{30}c_{29}...c_{23})_{2}-250} \!\!+\!\!{\sum}_{n\in\mathcal{N}} \!\!q_n p_{t,n}^2 \nu_{2}^2\nonumber\\
\triangleq & \!X^{\rm BF}_t(G,\nu_{2},\nu_{\infty}). \label{eq:Ez^2}
\end{align}
where $X^{\rm BF}_t(G,\nu_{2},\nu_{\infty})$ depends on 
$\nu_{\infty}=(c_{31}c_{30}...c_{0})_{\rm 2}$
since \(c_{30}c_{29}\cdots c_{23}\) are the exponent bits of \(\nu_{\infty}\). $X^{\rm BF}_t(G,\nu_{2},\nu_{\infty})$ is positively related to $G$, $\nu_{2}$ and $\nu_{\infty}$.
\end{enumerate}
\end{lemma}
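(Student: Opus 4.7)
The plan is to establish each claim by applying Lemma~\ref{lemma:expectation and Var} element-wise to the fixed-point representation of the local model, then combining across the $M$ coordinates and the $N$ devices. The key structural observation is that, after the floating-point-to-fixed-point conversion in \eqref{fp2fx}, every entry of $\boldsymbol{\omega}^{\rm FX}_{t,n}$ has sign bit $0$ and shares the exponent field $(c_{30}\cdots c_{23})_2+2$. Consequently Lemma~\ref{lemma:expectation and Var} applies uniformly across coordinates with $(-1)^{a_{31}}=1$ and $e=(c_{30}\cdots c_{23})_2+2$, and the server's subtraction of $3\cdot 2^{(c_{30}\cdots c_{23})_2-126}$ during recovery is a deterministic shift that affects the mean but not the variance.

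For the per-device expectation \eqref{eq:E_z_tn}, I would substitute $a=\omega_{t,n,m}+3\cdot 2^{(c_{30}\cdots c_{23})_2-126}$ into the mean formula of Lemma~\ref{lemma:expectation and Var}, use $\sum_{i=1}^{23}2^{-i}=1-2^{-23}$ to simplify the additive constant, and then subtract the recovery offset; the deterministic pieces cancel to leading order, leaving $(1-2p_{t,n})\omega_{t,n,m}$ plus a residual of order $p_{t,n}\cdot 2^{-23}\cdot 2^{(c_{30}\cdots c_{23})_2-126}$ that is absorbed by the $\approx$ sign. For the variance \eqref{eq:Var_z_tn}, I would invoke independence of the $23M$ bit-flip draws to sum the per-coordinate variances in \eqref{Var_of_a}; substituting the shared exponent $(c_{30}\cdots c_{23})_2+2$ into the $2e-254$ exponent of Lemma~\ref{lemma:expectation and Var} gives $2(c_{30}\cdots c_{23})_2-250$, and multiplying by $M$ yields the stated closed form.

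For the global-bias bound \eqref{eq:Ez^2}, I would apply the mean--variance decomposition $\mathbb{E}(\|\boldsymbol{z}_{t,\mathcal{G}}\|_2^2)=\|\mathbb{E}(\boldsymbol{z}_{t,\mathcal{G}})\|_2^2+\mathbb{E}(\|\boldsymbol{z}_{t,\mathcal{G}}-\mathbb{E}(\boldsymbol{z}_{t,\mathcal{G}})\|_2^2)$. Because the bit-flipping noises at different devices are independent, the variance of the $q_n$-weighted sum is $\sum_n q_n^2 \mathrm{Var}(\boldsymbol{z}_{t,n})$, which combined with \eqref{eq:Var_z_tn} produces the first summand of $X_t^{\rm BF}$. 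For the squared-mean piece, I would apply Jensen's inequality to the convex combination $\sum_n q_n \mathbb{E}(\boldsymbol{z}_{t,n})$ and then substitute \eqref{eq:E_z_tn} together with $\|\boldsymbol{\omega}_{t,n}\|_2\leq\nu_{2}$ to recover the $\sum_n q_n p_{t,n}^2 \nu_{2}^2$ term. Positivity of $X_t^{\rm BF}$ in $G,\nu_{2},\nu_{\infty}$ then follows because $c_{30}\cdots c_{23}$ are the exponent bits of $\nu_{\infty}$ and both $\nu_2$ and $\nu_\infty$ are nondecreasing in the clipping threshold $G$.

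The main obstacle is the constant bookkeeping around the fixed-point offset: each application of Lemma~\ref{lemma:expectation and Var} introduces a $2^{(c_{30}\cdots c_{23})_2-125}$ additive piece that must be reconciled against the recovery constant $3\cdot 2^{(c_{30}\cdots c_{23})_2-126}$, and the residual $2^{-23}$-scale terms must be shown negligible relative to the dominant $p_{t,n}\omega_{t,n,m}$ contribution (which scales with $\nu_{\infty}$) to justify the $\approx$ in \eqref{eq:E_z_tn}. A secondary point to verify carefully is the multiplicative constant in the Jensen step for the squared-mean term, since a naive bound carries an extra factor of $4$ from $\|\mathbb{E}(\boldsymbol{z}_{t,n})\|_2^2=4p_{t,n}^2\|\boldsymbol{\omega}_{t,n}\|_2^2$; beyond this, the rest of the argument reduces to linearity of expectation, independence across coordinates and devices, and the $\ell_2$-bound $\nu_{2}$, all of which are routine.
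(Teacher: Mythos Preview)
Your proposal is correct and follows essentially the same route as the paper: apply Lemma~\ref{lemma:expectation and Var} element-wise to the fixed-point representation (using the shared sign and exponent $(c_{30}\cdots c_{23})_2+2$), undo the offset $3\cdot 2^{(c_{30}\cdots c_{23})_2-126}$, sum the $M$ independent per-coordinate variances, and then bound $\mathbb{E}(\|\boldsymbol{z}_{t,\mathcal{G}}\|_2^2)$ via the mean--variance decomposition with independence across devices for the variance part and a convexity/Cauchy bound for the squared-mean part. Your flag on the constant in the squared-mean term is well taken: the paper's own derivation goes from $\|\mathbb{E}(\boldsymbol{z}_{t,n})\|_2^2$ directly to $p_{t,n}^2\nu_2^2$ despite \eqref{eq:E_z_tn} giving $\|\mathbb{E}(\boldsymbol{z}_{t,n})\|_2^2\approx 4p_{t,n}^2\|\boldsymbol{\omega}_{t,n}\|_2^2$, so the factor~$4$ you isolate is silently absorbed (or dropped) there as well; it does not affect the structure of the argument or the downstream convergence bound beyond a constant.
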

\begin{proof}
See \textbf{Appendix~\ref{Appendix:2}}.
\end{proof}

\begin{remark}[Comparison with the Gaussian Mechanism]
    When utilizing the Gaussian Mechanism to achieve the $(\epsilon,\delta)$-DP, the expectation and variance of $\boldsymbol{z}_{t,n}$ are 
\begin{align} 
&\!\!\!\!\!\!\!\!\!\!\!\!\mathbb{E}(\boldsymbol{z}_{t,n})=0\; \text{ and }\;
\mathbb{E}(\Vert \boldsymbol{z}_{t,n}\!\!-\!\!\mathbb{E}(\boldsymbol{z}_{t,n})\Vert_2^2) 
\!= \! M \sigma_n^2,\label{eq:Var_z_tn_gaussian}
\end{align}
where $\sigma_n=\frac{\Delta\boldsymbol{\omega_n}K\sqrt{2\ln(1.25/\delta)}}{\epsilon}$ is the Gaussian variance~\cite{dwork2014algorithmic}.

The $\ell_2$-norm of the global bias is upper bounded by
\begin{align}
& \!\!\!\!\mathbb{E}( \Vert\boldsymbol{z}_{t,\mathcal{G}}\Vert_2^2)\!\!\leq \!\!{\sum}_{n\in\mathcal{N}} q_n^2 M\frac{(\Delta\boldsymbol{\omega}_{\max,n} K)^2 2\ln(\frac{1.25}{\delta})}{\epsilon^2}\notag
 \\&  \!\!=  \!\!{\sum}_{n\in\mathcal{N}} q_n^2 M\frac{(2{\eta G} K)^2\times 2\ln(1.25/\delta)}{|\mathcal{D}_n|^2\epsilon^2} 
 \!\!\triangleq \!\!X^{\rm Gauss}_t(G).\label{eq:Ez^2_Gaussian}
\end{align}
When the privacy budget, \ie $\epsilon$, and the FL hyperparameters, \eg ${{\bar\kappa}_{\Delta\boldsymbol{\omega}_{\max} }}$, $\Delta \boldsymbol{\omega}$, and $K$,  are given, one can choose the suitable mechanism, \eg the proposed bit-flipping mechanism or the Gaussian mechanism, by numerically comparing the $\ell_2$-norm of the global bias under the two mechanisms, \ie~\eqref{eq:Ez^2} and~\eqref{eq:Ez^2_Gaussian}. The bit-flipping mechanism may enjoy a relatively larger privacy budget since it can leverage the communication noise to spare some privacy budget for bit-flipping.
\end{remark}

Next, we proceed to analyze the convergence of WFL under the proposed bit-flipping DP mechanism.
Recall a total of \(T<\infty\) training iterations, with \(E\) local iterations per communication round.
{ For illustration convenience, we consider full-batch gradient descent in this paper\footnote{ The convergence and privacy analysis under full-batch gradient descent can be extended to the mini-batch setting by accounting for the reduced exposure of individual samples and the stochastic nature of sampling~\cite{abs-1908-10530,DLDP,8455532,10542235,stich2018local}.}.}

\begin{assumption}\label{assumption}
Some widely used assumptions for analyzing convergence bounds for FL are considered:
\begin{enumerate}
		\item There exist such two constants $\mu>0$ and $\alpha>0$ that per any device $n$, the local loss function $F_n(\mathbf{w}),\forall n$ is $\mu$-strongly convex and $\alpha$-smooth, \ie $\alpha\left\Vert \mathbf{w}_{1}\!-\!\mathbf{w}_{2}\right\Vert _{2}\geq\! \left\Vert \nabla F_{n}\left(\mathbf{w}_{1}\right)\!-\!\nabla F_{n}\left(\mathbf{w}_{2}\right)\right\Vert_{2}\! \geq\! \mu\left\Vert \mathbf{w}_{1}\!-\!\mathbf{w}_{2}\right\Vert_{2} ,\forall \mathbf{w}_{1},\mathbf{w}_{2}\in \mathbb{R}^{M\times 1}$. Subsequently, the global loss function $F(\mathbf{w})$ is $\mu$-strongly convex and $\alpha$-smooth.
\item  As assumed in~\cite{Li2020On}, the variances of stochastic gradients are bounded by $\mathbb{E}_{\xi_{t,n}}\big(\Vert \nabla F_{n}(\mathbf{w}_{t,n},\xi_{t,n})- \nabla F_{n}(\mathbf{w}_{t,n})\Vert_2^2\big)\leq \sigma^2_n, \forall n,t$, where $\xi_{t,n}$ is the mini-batch of device $n$ in the $i$-th iteration of round $t$. Considering full-batch gradient, in this paper, $\sigma^2_n=0$.
\end{enumerate}
\end{assumption}
\begin{define}
\label{def-divergence}
For convergence analysis,  
\begin{enumerate}
    \item Define two virtually aggregated models at the $t$-th iteration as $\bar{\boldsymbol{\omega}}_t={\sum}_{n\in\mathcal{N}}q_n\boldsymbol{\omega}_{t,n}$ and $\mathbf{\bar{w}}_t={\sum}_{n\in\mathcal{N}}q_n\mathbf{w}_{t,n}$; see~\eqref{eq:local update 1} and~\eqref{eq:local update 2}, respectively.
    $\bar{\boldsymbol{\omega}}_t=\bar{\mathbf{w}}_t$, if $t\notin\mathcal{I}_E$;  $\bar{\mathbf{w}}_{t}=\bar{\boldsymbol{\omega}}_{t}+\boldsymbol{z}_{t,\mathcal{G}}$, if $t\in\mathcal{I}_E$.
    \item Let $\Gamma=F^*-{\sum}_{n\in\mathcal{N}}q_n F_n^*$ be the global heterogeneity degree to capture the heterogeneity of the local datasets.
\end{enumerate}
\end{define}

Under \textbf{Assumption 1}, the convergence upper bound of WFL under the proposed bit-flipping mechanism is established by analyzing $\Vert\bar{\mathbf{w}}_t-\mathbf{w}^*\Vert_2^2$, which is the distance between $\bar{\mathbf{w}}_t$, \ie the virtually aggregated model at the $t$-th training iteration, and $\mathbf{w}^*$, \ie the optimal global model, as stated in the following.

\begin{theo}\label{theo:convergence}
Given $T$ and $E$, the convergence upper bound of WFL under the bit-flipping mechanism is given by 
    \begin{align}
      \!\!\!  \!\!\!\mathbb{E}\!\left(\!\Vert\bar{\mathbf{w}}_{T}\!\!-\!\!\mathbf{w}^*\Vert_{2}^{2}\right)\!\leq\!\zeta_{1}\Vert\bar{\mathbf{w}}_{0}\!\!-\!\!\mathbf{w}^*\Vert_{2}^{2}\!\!+\zeta_{2}\eta^{2}\!B\!+\!\zeta_{3}\frac{ X^{\rm BF}(G,\nu_2,\nu_{\infty})}{\sqrt{p_{\max}}},\label{eq:bound_T}
    \end{align}
where the expectation $\mathbb{E}$ is taken over the random bit-flipping noise across all communication rounds; 
\begin{align}
  \notag  \zeta_1=&(1+\sqrt{p_{\max}})^{K}(1-\eta\mu)^{T};
    \\\notag 
    \zeta_2=&\frac{1\!\!-\!\!(1\!\!-\!\!\eta\mu)^{T}(1\!\!+\!\!\sqrt{p_{\max}})^{K}}{1\!\!-\!\!(1\!\!-\!\!\eta\mu)^{E}(1\!\!+\!\!\sqrt{p_{\max}})}(1\!\!+\!\!\sqrt{p_{\max}})\frac{1\!\!-\!\!(1\!\!-\!\!\eta\mu)^{E}}{\eta\mu};\\\notag 
    \zeta_3=&\frac{1\!\!-\!\!(1\!\!-\!\!\eta\mu)^{T}(1\!\!+\!\!\sqrt{p_{\max}})^{K}}{1\!\!-\!\!(1\!\!-\!\!\eta\mu)^{E}(1\!\!+\!\!\sqrt{p_{\max}})}(1\!\!+\!\!\sqrt{p_{\max}});
    \\\notag p_{\max}\!\!=&\underset{\forall t,n\in \mathcal{N}}{\max}\;p_{t,n};\;
    B=6\alpha\Gamma+8(E-1)^2G^2;\\\notag 
     X^{\rm BF}&(G,\nu_2,\nu_{\infty})=\underset{\forall t}{\max}\; X^{\rm BF}_t(G,\nu_2,\nu_{\infty}).%
\end{align}%
\end{theo}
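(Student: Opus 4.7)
The plan is to track the squared distance $a_t \triangleq \mathbb{E}(\Vert \bar{\mathbf{w}}_t - \mathbf{w}^*\Vert_2^2)$ along the virtual aggregated trajectory defined in \textbf{Definition~\ref{def-divergence}}, and to derive two recursive inequalities: one governing the $E-1$ purely local iterations inside a communication round, and one governing the aggregation step where the bit-flipping noise $\boldsymbol{z}_{t,\mathcal{G}}$ enters. The structural form of $\zeta_1$, $\zeta_2$, and $\zeta_3$ already dictates the shape of the recursion, namely a contraction factor $(1-\eta\mu)$ per local step and a dilation factor $(1+\sqrt{p_{\max}})$ per aggregation, so the task reduces to establishing these two factors and then summing the resulting geometric series over $T=KE$ iterations.

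First I would handle the local iterations $t\notin \mathcal{I}_E$. Using the fact that $\bar{\mathbf{w}}_{t+1}=\bar{\mathbf{w}}_t-\eta \sum_n q_n \mathrm{Clip}_G(\nabla F_n(\mathbf{w}_{t,n},\mathcal{D}_n))$ and invoking $\mu$-strong convexity together with $\alpha$-smoothness from \textbf{Assumption~\ref{assumption}}, I would split the usual cross-term as
\begin{align*}
\langle \bar{\mathbf{w}}_t-\mathbf{w}^*, \eta \nabla F(\bar{\mathbf{w}}_t)\rangle \;\geq\; \eta\mu \Vert\bar{\mathbf{w}}_t-\mathbf{w}^*\Vert_2^2 + \eta(F(\bar{\mathbf{w}}_t)-F^*),
\end{align*}
and then control the deviation between $\sum_n q_n \nabla F_n(\mathbf{w}_{t,n})$ and $\nabla F(\bar{\mathbf{w}}_t)$ by a standard client-drift bound. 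Since each $\mathbf{w}_{t,n}$ has departed from the last synchronization point for at most $E-1$ local iterations with clipped gradient magnitude $G$, the drift is bounded by $O((E-1)G)$; combined with the heterogeneity term $\Gamma$, this produces the quantity $B=6\alpha\Gamma+8(E-1)^2 G^2$. The outcome of this step is
\begin{align*}
a_{t+1} \leq (1-\eta\mu)\,a_t + \eta^2 B, \qquad t\notin \mathcal{I}_E.
\end{align*}

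Next I would treat the aggregation step $t\in\mathcal{I}_E$. Here $\bar{\mathbf{w}}_t = \bar{\boldsymbol{\omega}}_t + \boldsymbol{z}_{t,\mathcal{G}}$, so applying Young's inequality with free parameter $\sqrt{p_{\max}}$ gives
\begin{align*}
a_t \leq (1+\sqrt{p_{\max}})\,\mathbb{E}\Vert \bar{\boldsymbol{\omega}}_t-\mathbf{w}^*\Vert_2^2 + \bigl(1+\tfrac{1}{\sqrt{p_{\max}}}\bigr)\mathbb{E}\Vert\boldsymbol{z}_{t,\mathcal{G}}\Vert_2^2.
\end{align*}
Plugging in the bound $\mathbb{E}\Vert\boldsymbol{z}_{t,\mathcal{G}}\Vert_2^2 \leq X^{\rm BF}_t(G,\nu_2,\nu_\infty)\leq X^{\rm BF}(G,\nu_2,\nu_\infty)$ from \textbf{Lemma~\ref{lemma:z}} and combining with the local-iteration recursion yields the per-round inequality
\begin{align*}
a_{t} \leq (1+\sqrt{p_{\max}})\Bigl[(1-\eta\mu)^E a_{t-E} + \tfrac{1-(1-\eta\mu)^E}{\eta\mu}\eta^2 B\Bigr] + \bigl(1+\tfrac{1}{\sqrt{p_{\max}}}\bigr)X^{\rm BF}.
\end{align*}
The factor $1+\tfrac{1}{\sqrt{p_{\max}}}$ is absorbed by using $1+\tfrac{1}{\sqrt{p_{\max}}}\leq \tfrac{1+\sqrt{p_{\max}}}{\sqrt{p_{\max}}}$, which explains the division by $\sqrt{p_{\max}}$ in the stated~$\zeta_3$ term.

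Finally, I would unroll this recursion over the $K$ communication rounds. The homogeneous part produces $[(1-\eta\mu)^E(1+\sqrt{p_{\max}})]^K\,a_0 = \zeta_1 \Vert\bar{\mathbf{w}}_0-\mathbf{w}^*\Vert_2^2$, and the two inhomogeneous parts form geometric series in the common ratio $(1-\eta\mu)^E(1+\sqrt{p_{\max}})$, yielding exactly the coefficients $\zeta_2$ and $\zeta_3$ stated in the theorem. The main obstacle I anticipate is the client-drift step: rigorously bounding $\sum_n q_n \Vert \mathbf{w}_{t,n}-\bar{\mathbf{w}}_t\Vert_2^2$ under the combination of gradient clipping, heterogeneity $\Gamma$, and the fact that at the most recent aggregation the synchronization point was itself perturbed by $\boldsymbol{z}_{t,\mathcal{G}}$, so care must be taken to ensure the perturbation only enters through the already-accounted $\zeta_3$ term rather than inflating $\zeta_2$. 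The secondary bookkeeping challenge is converting the mixed recursion (different behavior at $t\in\mathcal{I}_E$ vs.\ $t\notin\mathcal{I}_E$) into the closed-form geometric sums that match $\zeta_1,\zeta_2,\zeta_3$ verbatim.
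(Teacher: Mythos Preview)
Your proposal is correct and follows essentially the same route as the paper: the paper likewise invokes the one-step contraction $\Vert\bar{\boldsymbol{\omega}}_t-\mathbf{w}^*\Vert_2^2\leq(1-\eta\mu)\Vert\bar{\boldsymbol{\omega}}_{t-1}-\mathbf{w}^*\Vert_2^2+\eta^2 B$ (which it imports from~\cite{Li2020On} rather than re-deriving), applies Young's inequality with the same parameter $\sqrt{p_{\max}}$ at each aggregation step, bounds $\mathbb{E}\Vert\boldsymbol{z}_{t,\mathcal{G}}\Vert_2^2$ via \textbf{Lemma~\ref{lemma:z}}, and then unrolls the resulting per-round recursion into the geometric sums defining $\zeta_1,\zeta_2,\zeta_3$. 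The only cosmetic difference is that the identity $1+\tfrac{1}{\sqrt{p_{\max}}}=\tfrac{1+\sqrt{p_{\max}}}{\sqrt{p_{\max}}}$ is in fact an equality, so no absorption inequality is needed there.
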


\begin{proof}
    See \textbf{Appendix~\ref{Appendix:convergence_proof}}.
\end{proof}

 When the bit-flipping noise (resulting from both the artificial noise and noisy wireless channels) is small enough, \ie $p_{\max}\rightarrow0$ and $X^{\rm BF}\rightarrow 0$, the convergence upper bound in~\eqref{eq:bound_T} tends to 
$\mathbb{E}\!\left(\!\Vert\bar{\mathbf{w}}_T\!-\!\mathbf{w}^*\Vert_2^2\right) \leq(1\!-\!\eta\mu)^{T}\Vert\bar{\mathbf{w}}_{0}\!-\!\mathbf{w}^*\Vert_2^2+\eta^{2}\!B\frac{1-\!(1\!-\!\eta\mu)^{T}}{\eta\mu}$,
which is consistent with the upper bound of the existing baseline developed with no privacy consideration in~\cite{Li2020On}. The validity of~\eqref{eq:bound_T} is cross-verified. 
In addition, $ X^{\rm BF}(G,\nu_2,\nu_{\infty})$ can be replaced by $ X^{\rm Gauss}(G)=\underset{\forall t}{\max}\; X^{\rm Gauss}_t(G)$ in~\eqref{eq:bound_T} under the Gaussian mechanism.

Moreover, the first term on the RHS of~\eqref{eq:bound_T} shows the dominant impact of the increasing number of training iterations, which decreases and converges to zero with the increase of $T$.
The second term accounts for the data heterogeneity of the participating devices. The third term of~\eqref{eq:bound_T} captures the impact of the bit-flipping noise on the convergence and tends to zero when no DP noise is considered.
All terms on the RHS of~\eqref{eq:bound_T} consist of the maximum noise level $p_{\max}$, which penalizes the utility of WFL. Particularly, the first term may fail to converge when $(1+\sqrt{p_{\max}})(1-\eta\mu)^{E}>1$, resulting from a reduction of the privacy budget and further enlarging the end-to-end BER. The other terms on the RHS of~\eqref{eq:bound_T} increase with $p_{\max}$. 
To this end, it is imperative to control the increase of $p_{\max}$ to balance the privacy and utility of FL.

\begin{remark}[Impact of clipping threshold]
Selecting an appropriate gradient clipping threshold $G$ is crucial. If $G$ is too small, the WFL convergence can significantly slow down~\cite{9252950,pmlr-v202-koloskova23a}. 
If $G$ is too large, the $\ell_{\infty}$-norm of the clipped gradient vector $\nu_{\infty}$ would increase the convergence upper bound and penalize the accuracy of WFL. This is due to the fact that  $X^{\rm BF}(G,\nu_2,\nu_{\infty})$ in~\eqref{eq:bound_T} is positively related to both $\nu_2$ and $\nu_\infty$.
Typically, $G$ is specified empirically.
With reference to~\cite{Wei2020Federated,DLDP}, in this paper, we set $G$ as the median of the $\ell_2$-norms of the unclipped gradient obtained during training. 
\end{remark}

\section{Experiments and Evaluations}\label{section:results}
In this section, we conduct extensive experiments to validate the convergence analysis of the proposed channel-native bit-flipping mechanism for privacy-preserving WFL.

\subsection{Estimation of Bit-Level Distance}\label{section:binary_sensitivity}

It is non-straightforward to analytically derive the proposed expected bit-level distance ${{\bar\kappa}_{\Delta\boldsymbol{\omega}_{\max} }}$ in~\eqref{binary_sensitivity}, due to the lack of the \textit{a-priori} knowledge about the continuous domain $\mathcal{R}$. Nevertheless, each device can readily estimate its expected bit-level distance ${{\bar\kappa}_{\Delta\boldsymbol{\omega}_{\max} }}$ within a practical discrete domain sampled from the continuous domain~$\mathcal{R}$.
Specifically, each device can pre-train its local model parameters and collect a set of model parameters, denoted as $\mathcal{W}$. 
It can also create a set of  $\mathcal{X}=\{\boldsymbol{x}\;|\;\Vert\boldsymbol{x}\Vert_2=\Delta \boldsymbol{\omega}_{\max} \}$ comprising a large number of samples (\eg  $10,000$ samples in our experiments) randomly selected from the $M$-dimensional vector space satisfying $\Vert\boldsymbol{x}\Vert_2=\Delta \boldsymbol{\omega}_{\max}$.
Then, the device can estimate the expected bit-level distance ${{\bar\kappa}_{\Delta\boldsymbol{\omega}_{\max} }}$, as given by 
\begin{align}
&{\bar\kappa}_{\Delta\boldsymbol{\omega}_{\max} }\!=\!\frac{\sum _{\boldsymbol{x}\in\mathcal{X},\boldsymbol{\omega}\in\mathcal{W}}\kappa(\omega_{1},\!\cdots\!,\omega_{M},x_{1},\cdots\!,x_{M})}{\sum _{\boldsymbol{x}\in\mathcal{X},\boldsymbol{\omega}\in\mathcal{W}}1},\label{estimate_binary_sensitivity}
\end{align}
which is a discrete approximation of \eqref{binary_sensitivity}.

Fig.~\ref{Different_bits} evaluates the impacts of the model size $M$ and the classical sensitivity $\Delta \boldsymbol{\omega}_{\max}$ on the estimated expected bit-level distance ${{\bar\kappa}_{\Delta\boldsymbol{\omega}_{\max} }}$. 
In general, ${{\bar\kappa}_{\Delta\boldsymbol{\omega}_{\max} }}$ grows with $M$ and $\Delta \boldsymbol{\omega}_{\max}$, while exhibiting some fluctuations. This is because, given $\Delta \boldsymbol{\omega}_{\max}$, enlarging $M$ increases the $\ell_1$-norms of the distances between bitstreams. Then, ${{\bar\kappa}_{\Delta\boldsymbol{\omega}_{\max} }}$ also increases, since ${{\bar\kappa}_{\Delta\boldsymbol{\omega}_{\max} }}$ is no smaller than the expectation of the $\ell_1$-norms; see \textbf{Remark~\ref{remark kappa}}.

\begin{figure} 
    \centering
    \includegraphics[scale=0.5]{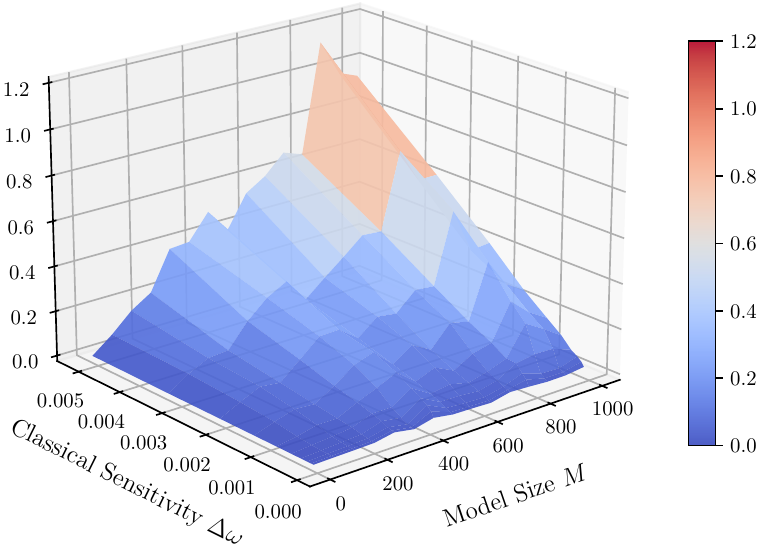}
    \caption{Estimated expected bit-level distance ${{\bar\kappa}_{\Delta\boldsymbol{\omega}_{\max} }}$ versus model parameter size $M$ and classical sensitivity $\Delta \boldsymbol{\omega}_{\max}$.}
    \label{Different_bits}
\end{figure}

\subsection{Evaluation of Bit-Flipping DP for WFL}
 
By default, we set the communication BER, $p_{t,n,\mathrm{C}}$, to be randomly and uniformly distributed within $[0,0.02]$. 
When the bit-flipping DP mechanism is in use, the end-to-end and the artificial BERs are generated using \textbf{Algorithm 1}.

\subsubsection{ML Model and Dataset}
We consider the following two ML models and two widely used public datasets. 

\vspace{2 mm}
\noindent $\bullet$ \textbf{CNN \& F-MNIST:}
We consider an image classification task to classify a non-identical and independently distributed (non-i.i.d.) F-MNIST dataset based on a CNN model. The CNN model has two convolutional layers with 32 or 64 convolutional filters per layer. The size of the CNN model is $M=1.21\times 10^6$. The F-MNIST dataset 
has $671,585$ training examples, $77,483$ testing examples and $62$ labels. There are $20$ devices, 
each randomly selecting $|\mathcal{D}_n|=2,000$ training examples with 10 labels. 
The local learning rate is $\eta=0.1$. By default, the gradient clipping threshold is $G=1$. We can empirically obtain $\nu_2=16$ and $\nu_{\infty}=0.5$ by evaluating the $\ell_{2}$- and $\ell_{\infty}$-norms of the updated model parameters. With reference to~\cite{Wei2020Federated,DLDP}, we set $G$ as the median of the norms of the unclipped gradient throughout training. The classical sensitivity is $\Delta \boldsymbol{\omega}_{\max}=10^{-4}$. The expected bit-level distance is estimated to be ${{\bar\kappa}_{\Delta\boldsymbol{\omega}_{\max} }}=0.02$, as described in Section~\ref{section:binary_sensitivity}.

\vspace{2 mm}
\noindent $\bullet$ \textbf{ResNet18 \& Fed-CIFAR100:}
Another image classification task is to train the ResNet18 model on the Fed-CIFAR100 dataset. The ResNet18 model has 18 layers, including 17 convolutional layers and a fully connected layer. Compared to a plain CNN model, the ResNet model inserts shortcut connections to prevent vanishing gradients~\cite{He_2016_CVPR}. The size of the ResNet18 model is $M=11.69\times10^6$. 
Fed-CIFAR100 divides the standard CIFAR100 into 10 training subsets with 10 image classes per subset. Each image has $32\times 32$ pixels. There are $10$ devices, each training the image samples of all 10 classes, a total of $|\mathcal{D}_n|=2,000$ images.
The learning rate is~$\eta=1$. 
The default clipping threshold is $G=1$, and correspondingly, $\nu_2=128$ and $\nu_{\infty}=1$.
The classical sensitivity is $\Delta\boldsymbol{\omega}_{\max}= 10^{-3}$. Then, the expected bit-level distance is estimated to be ${{\bar\kappa}_{\Delta\boldsymbol{\omega}_{\max} }}=0.025$.

\subsubsection{Benchmarks}
We also implement the following channel-agnostic baselines to serve as the benchmarks. 

\vspace{2 mm}
\noindent $\bullet$  \textbf{Channel-agnostic bit-flipping mechanism:}  
The artificial bit-flipping noise generated by $(\lambda,\epsilon)$-Rényi DP (see Sections~\ref{subsection_map} and~\ref{artificial_noise}) is employed solely for privacy protection, \ie $p_{t,n,\mathrm{A}}=p_{t,n}$. 
It does not exploit the communication noise for privacy protection. 

\vspace{2 mm}
    
\noindent $\bullet$ \textbf{Channel-agnostic Gaussian mechanism (with erroneous packets accepted):} 
The local models are solely protected by artificial Gaussian noise satisfying $(\epsilon,\delta)$-DP~\cite{Wei2020Federated}.  
The communication noise may flip the sign and exponent bits of \textbf{binary32} numbers, compromising the convergence of WFL.

\vspace{2 mm}
\noindent $\bullet$ \textbf{Channel-agnostic Gaussian mechanism with erroneous packets dropped:} 
    Different from the channel-agnostic Gaussian mechanism, the local models of a device are segmented and packetized. The standard WiFi packet of $2,312$ bytes is considered. Each packet is encoded with a circular redundancy check, and discarded at the server if it fails to pass the check. This mechanism represents the state of the art.

\vspace{2 mm}
In the baselines using the Gaussian mechanism, the DP noise scale is $\sigma=\frac{\Delta \boldsymbol{\omega}_{\max}K\sqrt{2\ln(1.25/\delta)}}{\epsilon'}$ satisfying $(\epsilon',\delta)$-DP~\cite{dwork2014algorithmic}.
For a fair comparison, $(\lambda,\epsilon)$-Rényi DP can be equivalently converted into $(\epsilon',\delta)$-DP,
with $\delta=\frac{e^{(\lambda-1)(\epsilon-\epsilon')}}{\lambda-1}(1-\frac{1}{\lambda})^{\lambda}$~\cite{canonne2021discrete}. In this paper, we set $\epsilon=\epsilon'$ by default.

\subsubsection{Comparison with Channel-Agnostic DP Mechanisms}

\begin{figure} 
    \centering
    \includegraphics[scale=0.4]{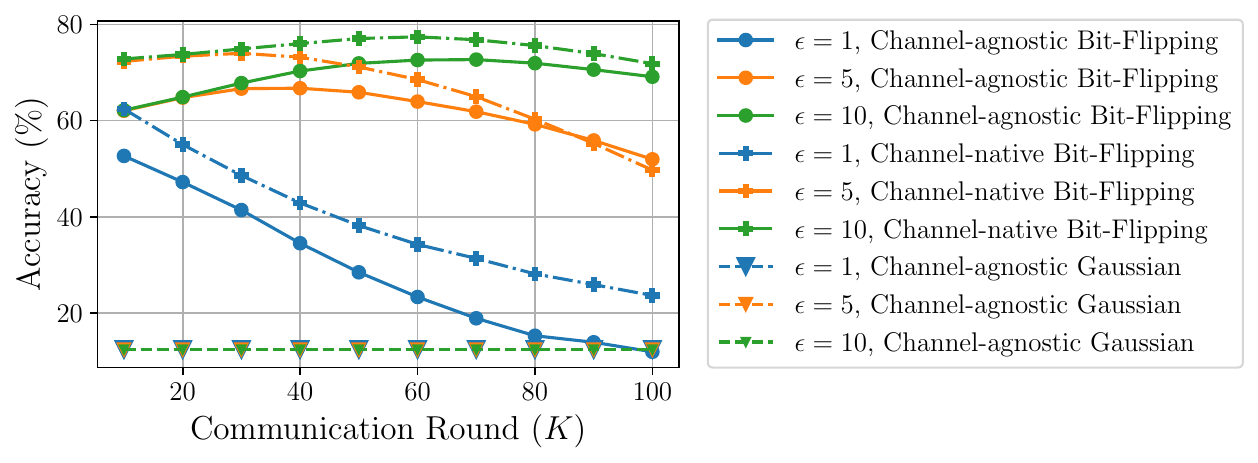}
    \caption{Comparison of the proposed channel-native mechanism with channel-agnostic mechanisms suffering from bit-flipping communication noises. ${{\bar\kappa}_{\Delta\boldsymbol{\omega}_{\max} }}=0.02$; $\Delta \boldsymbol{\omega}_{\max}=10^{-4}$; $E=50$; and $\lambda=2$. }
    \label{fig:compare_different_mechanism}
\end{figure}

Fig.~\ref{fig:compare_different_mechanism} evaluates the impact of different DP mechanisms on the accuracy of WFL in a noisy wireless environment. 
The proposed channel-native bit-flipping mechanism outperforms all channel-agnostic mechanisms, since it takes advantage of the noisy nature of wireless channels for privacy protection.
By contrast, the channel-agnostic schemes overlook the privacy-preserving capability offered by wireless channels, and hence, suffer from excessive privacy protection at the cost of accuracy.

The proposed bit-flipping mechanism outperforms the Gaussian mechanisms, even when it overlooks the privacy-preserving capability of noisy wireless channels, thanks to the new floating-point-to-fixed-point conversion method.
This is because all bits in \textbf{binary32} are transmitted in the channel-agnostic Gaussian mechanism, and their sign and exponent bits can have errors and dramatically distort the local models. By contrast, only the fraction bits are transmitted under the proposed approach, with 28\% lower traffic than the channel-agnostic Gaussian mechanism, benefiting from the floating-point-to-fixed-point conversion; see Section IV-A.
 \begin{figure} 
    \centering
    \includegraphics[scale=0.4]{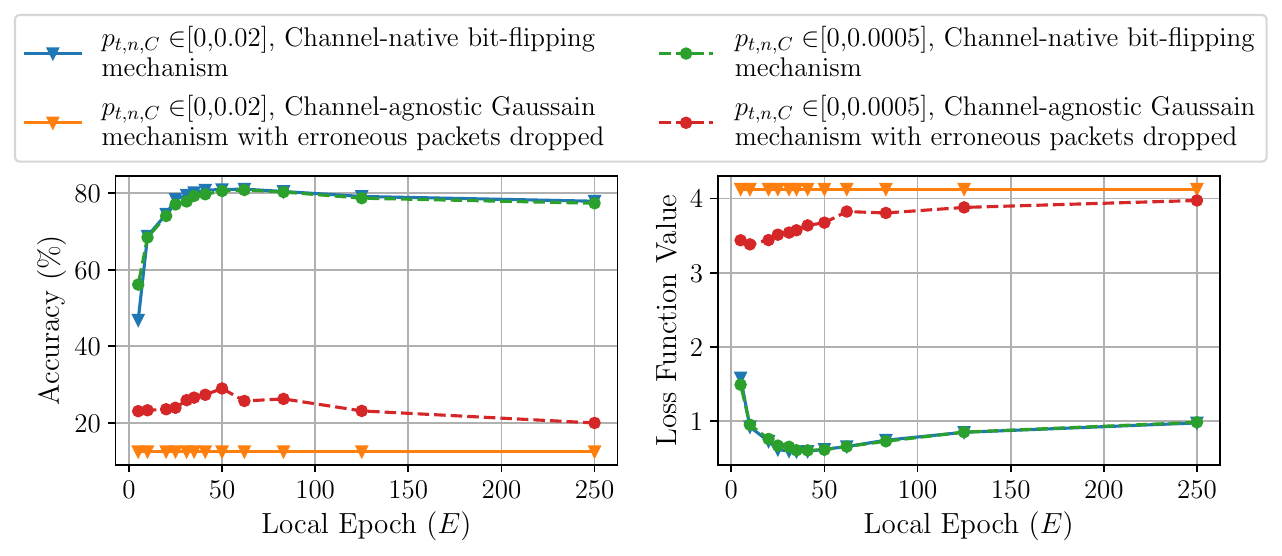}
    \caption{Comparison of the proposed bit-flipping mechanism with channel-agnostic Gaussian mechanism suffering from erroneous packets dropped. $\epsilon=10$; $T=2,500$; and $\lambda=2$.  }
    \label{fig:Packet_MNIST_CNN_accuracy_T_2500}
\end{figure}

Fig.~\ref{fig:Packet_MNIST_CNN_accuracy_T_2500} compares the proposed channel-native bit-flipping mechanism with the channel-agnostic Gaussian mechanism under a more classical setting with any erroneous packets dropped at the server. The standard WiFi packet length of $2,312$ bytes is considered. 
Both mechanisms experience the same channel conditions with the same communication BER, $p_{t,n,C}$, distributed uniformly randomly within the default range $[0, 0.02]$ and within the range $[0, 0.0005]$.
The proposed mechanism is significantly better, indicating that even erroneous local models corrupted by channel fading and noise can still contain useful model information and can be aggregated to improve the convergence and accuracy of WFL.

\subsubsection{Impact of Parameters}

 \begin{figure} 
    \centering
    \includegraphics[scale=0.4]{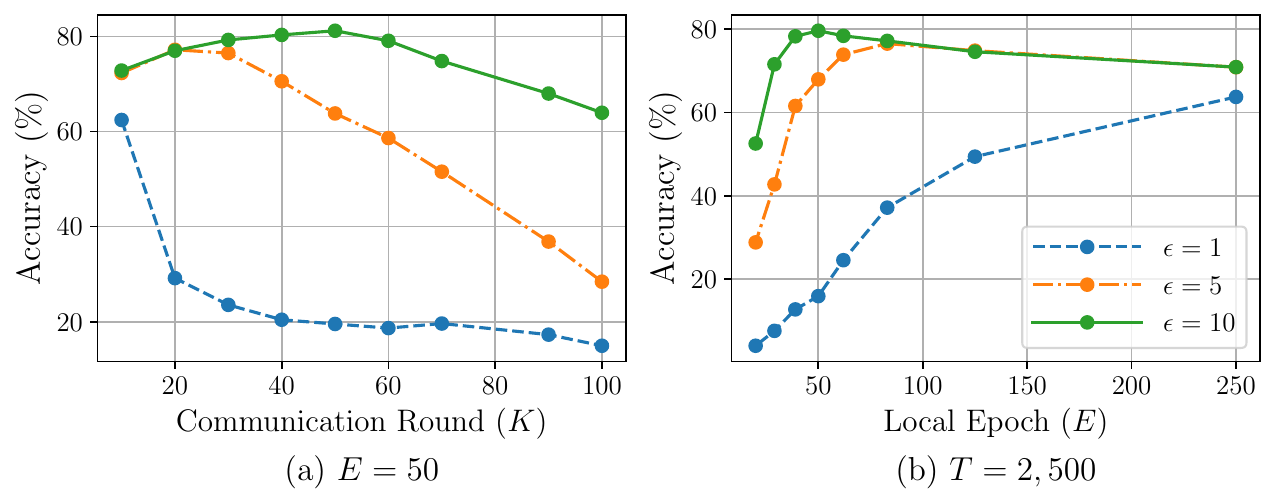}
    \caption{Accuracy of CNN on F-MNIST versus $K$ and $E$. $\lambda=2$.}
    \label{fig:MNIST_CNN_accuracy_T_2500}
\end{figure}

Fig.~\ref{fig:MNIST_CNN_accuracy_T_2500} plots the accuracy of the CNN model on the F-MNIST dataset. Fig.~\ref{fig:CIFAR100_Resnet18_accuracy_T_1500} plots the accuracy of ResNet18 on the Fed-CIFAR100 dataset. 
It is demonstrated in both figures that increasing the privacy budget $\epsilon$ leads to a decrease in the end-to-end BER, thereby improving the accuracy of WFL under the mechanism. 

 \begin{figure} [t]
    \centering
    \includegraphics[scale=0.4]{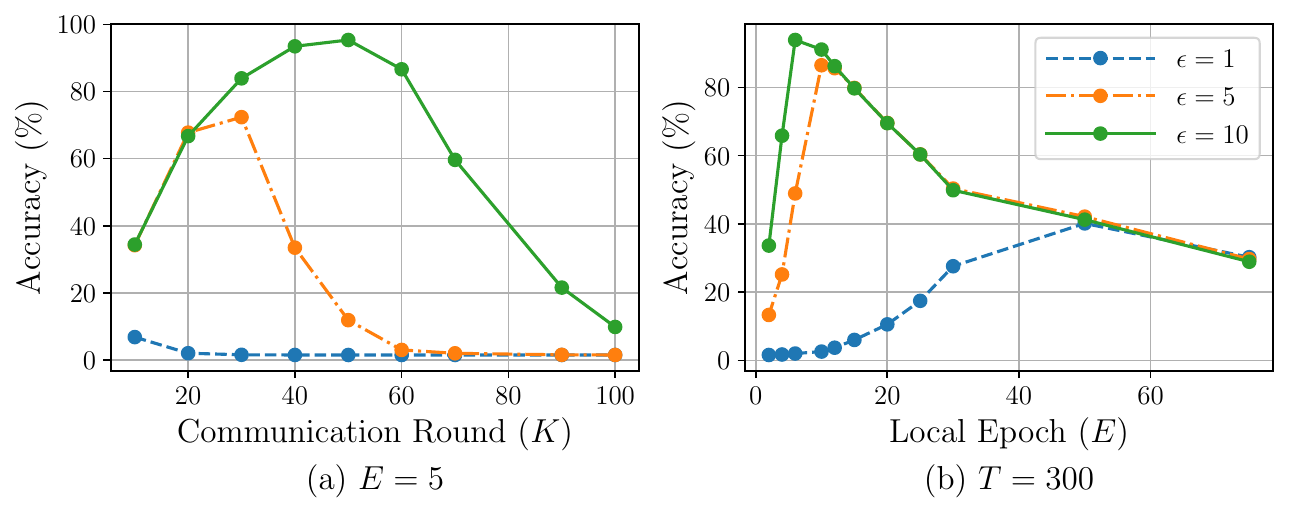}
    \caption{Accuracy of ResNet18 model on Fed-CIFAR100 dataset versus $K$, $E$ and $\epsilon$, where $\lambda=2$.}
    \label{fig:CIFAR100_Resnet18_accuracy_T_1500}
\end{figure}

Figs.~\ref{fig:MNIST_CNN_accuracy_T_2500}(a) and~\ref{fig:CIFAR100_Resnet18_accuracy_T_1500}(a) examine the impact of the number of local epochs $E$ on the utility of two image classification tasks, respectively.
It is observed that the optimal value of $E$ exists within the given range. This is because a larger $E$ reduces the number of communication rounds, \ie $K$, when the total number of iterations, $T$, is given, thereby alleviating the impact of the bit-flipping noise on the training accuracy of WFL under the proposed bit-flipping DP mechanism. On the other hand, in the face of the heterogeneity of the local datasets, a larger $E$ can cause the local models to be biased towards the local data of the clients, resulting in poor global training performance.

Figs.~\ref{fig:MNIST_CNN_accuracy_T_2500}(b) and~\ref{fig:CIFAR100_Resnet18_accuracy_T_1500}(b) evaluate the proposed DP mechanism, where $T$ varies while $E$ remains unchanged.
The efficiency of the image classification tasks initially increases and then decreases, as $T$ increases. The underlying reason is two-fold. On the one hand, a decrease in $T$ leads to a decrease in the end-to-end BER, $p_{t,n}$, benefiting the convergence of WFL. On the other hand, an increase of $T$ allows for more local iterations and global aggregations, improving the accuracy of WFL. In this sense, the optimal $T$ exists in the proposed mechanism.

As $T$ keeps increasing, the end-to-end BER, $p_{t,n}$, increases and converges to $0.5$. In this case, the local models are severely corrupted by the artificial and communication noises and become nearly completely random.
As a consequence, WFL training cannot even start properly.

\subsection{Statistical Properties of Bit-Flipping}\label{section:results2}

 Fig.~\ref{fig:mean and Variance} shows the consistency between the analytical and empirical results of the mean and variance of~$\tilde{a}$. The validity of \textbf{Lemma~\ref{lemma:expectation and Var}} is confirmed. 
  As shown in Fig.~\ref{fig:mean and Variance}(a), the mean of~$\tilde{a}$ is linear in the range of the original floating-point numbers with the same sign and exponent parts, which is consistent with~\eqref{expectation_of_a}. As shown in Fig.~\ref{fig:mean and Variance}(b), the variance of $\tilde{a}$ does not change with $a$ in the range with the same sign and exponent parts, which is consistent with~\eqref{Var_of_a}.
  These phenomena can be observed across $a\in[-\infty,\infty]$.

\begin{figure}
    \centering
    \begin{subfigure}[b]{0.24\textwidth}
        \includegraphics[width=4.3cm]{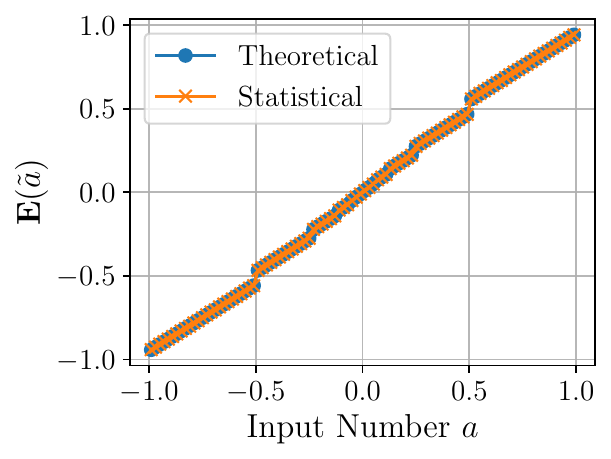}
        \label{fig:sub1}
    \end{subfigure}
    \hfill
    \begin{subfigure}[b]{0.24\textwidth}
        \includegraphics[width=4.3cm]{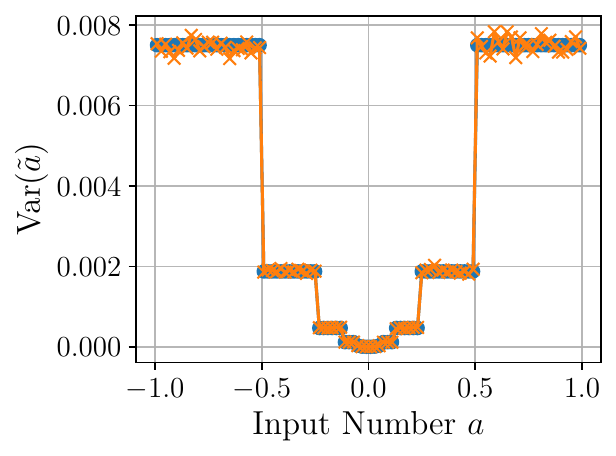}
        \label{fig:sub2}
    \end{subfigure}
    \caption{The statistical properties of perturbed values $\tilde{a}$ vs. original numbers $a$, where the bit-flipping probability is $p=0.1$ and we show $a\in [-1,+1]$ for the clarity of the figure.}\label{fig:mean and Variance}
\end{figure}

\section{Conclusion}
\label{sec:conclusion}
This paper proposed a novel channel-native bit-flipping DP mechanism that exploits inherent noise in wireless channels to preserve the privacy of WFL. 
Specifically, we proposed a new floating-point-to-fixed-point conversion method, which only requires the bits in the fraction part of model parameters to be transmitted and hence prevents dramatic model distortions that would otherwise occur when transmitting floating-point numbers, \eg IEEE~754 \textbf{binary32}.
We also analyzed the expected bit-level distance of the model parameters, and proved rigorously that the proposed mechanism satisfies $(\lambda,\epsilon)$-Rényi DP and facilitates WFL convergence. 
Extensive experimental evaluations of two image classification tasks demonstrated the superiority of the proposed mechanism compared to the state-of-the-art channel-agnostic Gaussian mechanisms. 

\appendix
\subsection{Proof of \textbf{Lemma~\ref{lemma:expectation and Var}}}\label{proof_E_Var_of_a}

We start by proving~\eqref{expectation_of_a}.
After flipping the bits in the fraction part of $a$ with probability $p$ (resulting from the bit-flipping perturbation and receiver noise), the probability of each received bit, $\tilde{a}_i,i=0,1,\ldots,22$, is given by
\begin{align}
         \Pr(\tilde{a}_i|{a}_i)=\left\{ \begin{array}{ll}
1-p, &  \text{if $a_{i}$ is received correctly;}\\
p, & \text{otherwise,}
\end{array}\right.\notag
    \end{align}
 Then, the expectation of each received bit is given by
\begin{align}\notag
         \mathbb{E}(\tilde{a}_i)=(1-p)a_i+p(1-a_i)=a_i(1-2p)+p.%
    \end{align}%

The (decimal) mean of $\tilde{a}=(\tilde{a}_{22}\tilde{a}_{21}\cdots\tilde{a}_0)_2$ is given by
    \begin{align}
      \!\!\! \!\!\mathbb{E}(\tilde{a}) \!\!=\!\!(\!-\!1)^{a_{31}}2^{(a_{30}a_{29}...a_{23})_{2}\!-\!127}\Big(1\!\!+\!\!\!{\sum}_{i\!=\!1}^{23}\mathbb{E}(\tilde{a}_{23\!-\!i})2^{\!-\!i}\Big). \label{eq:flipped expectation 1}
    \end{align}
By plugging $\mathbb{E}(\tilde{a}_i)=a_i(1-2p)+p$ into~\eqref{eq:flipped expectation 1}, $\mathbb{E}(\tilde{a})$ becomes
    \begin{small}
         \begin{align}
       \notag\mathbb{E}(\tilde{a})
      \notag =&(1\!\!-\!\!2p)a\!\!+\!\!(\!-\!1)^{a_{31}}2^{(a_{30}a_{29}...a_{23})_{2}\!-\!127}\left(2p\!+\!{\sum}_{i\!=\!1}^{23}2^{\!-\!i}p\right),%
   \end{align}%
    \end{small}%
    which leads to~\eqref{expectation_of_a}.
Next, we prove \eqref{Var_of_a}. 
After flipping the fraction part of $a$ with probability $p$, the variance of each bit is $\textrm{Var}(\tilde{a}_{i})=p(1-p),i=0,\cdots,22$.  
   Then, the (decimal) variance of the bit-flipping value is given by 
    \begin{align}
        \notag\mathrm{Var}(\tilde{a}) &=\left((-1)^{a_{31}}2^{(a_{30}a_{29}...a_{23})_{2}-127}\right)^2{\sum}_{i=1}^{23}p(1-p)2^{-2 i}\\
        &=\frac{(1\!\!-\!\!4^{\!-\!23})}{3}p(1\!\!-\!\!p)\left(2^{(a_{30}a_{29}...a_{23})_{2}\!-\!127}\right)^2,\label{eq:flipped_variance_1}
    \end{align}
which leads to~\eqref{Var_of_a}.

\subsection{Proof of \textbf{Theorem~\ref{theo1_renyi}}}\label{proof_theo1}
{ 
For conciseness, we suppress the subscripts ``\(_t\)'' and ``\(_n\)'' from \(p_{t,n}\), \(\boldsymbol{u}_{t,n}\), \(\mathcal{D}_n\), \(\Delta\boldsymbol{\omega}_{\max,n}\), and ${\bar\kappa}_{\Delta\boldsymbol{\omega}_{\max,n} }$ in the rest of this section.
The bit-flipping DP mechanism applied to the local model parameters $\boldsymbol{u}\in\mathbb{B}^{23M\times1}$ adheres to $(\lambda, \frac{\epsilon}{K})$-Rényi DP, and satisfies 
\begin{align} 
D_{\lambda}(Q_{ \boldsymbol{u}}\| Q_{\boldsymbol{u}'}) =\sum_{k=1}^{23M}  D_{\lambda}(Q_{k,u_k}[\boldsymbol{\rho}_k]\|Q_{k,u_k'}[\boldsymbol{\rho}_k])\leq \frac{\epsilon}{K},\label{D_all_definition}
 \end{align}
where the Rényi divergence $D_{\lambda}(Q_{ \boldsymbol{u}}\| Q_{\boldsymbol{u}'})$
is based on the composition of independently applying the bit-flipping mechanism to each bit of the bitstream, incorporating the corresponding significance (binary weight) $2^{(k{\rm\, mod\,}  23)-23}$ of the $k$-th bit; $Q_{ \boldsymbol{u}}$ and $Q_{ \boldsymbol{u}'}$ denote the output distribution after applying bit-flipping mechanism to $\boldsymbol{u}$ and $\boldsymbol{u}'$, respectively;
$D_{\lambda}(Q_{k,u_k}[\boldsymbol{\rho}_k]\|Q_{k,u_k'}[\boldsymbol{\rho}_k])$ is the Rényi divergence of $k$-th bit:
{\small\begin{align}
\label{define_divergence_lambda_L} D_{\lambda}(Q_{k,u_k}[\boldsymbol{\rho}_k]\|Q_{k,u_k'}[\boldsymbol{\rho}_k])\!\triangleq\!\frac{1}{\lambda\!-\!1}\ln\bigg[\mathbb{E}_{\boldsymbol{\rho}_{k}\sim Q_{k,u_k'}}\Big(\frac{Q_{k,u_k}[\boldsymbol{\rho}_k]}{Q_{k,u_k'}[\boldsymbol{\rho}_k]}\Big)^{\lambda}\bigg],
\end{align}%
}%
where $\boldsymbol{\rho}_{k}$ denotes the resultant decimal representation after flipping the $k$-th bit $u_k$ (or $u_k'$) in the received bitstream, i.e.,
$\boldsymbol{\rho}_{k} = \sum_{i=0}^{22} 2^{i-23} \rho_i$, and 
$\rho_{(k\, \mathrm{mod}\, 23)} \triangleq \hat{u}_k$ (or $\hat{u}_k'$) denotes the received version of $u_k$ (or $u_k'$), which may have been flipped. The remaining bits $\rho_i$, $\forall i \neq (k\, \mathrm{mod}\, 23)$, are assumed to be independently perturbed with unknown distributions.
$Q_{k,u_k}[\boldsymbol{\rho}_k]$ and $Q_{k,u_k'}[\boldsymbol{\rho}_k]$ are the probability mass functions (PMFs) of the decimal representation $\boldsymbol{\rho}_k$ after randomly flipping $\hat{u}_k$ and $\hat{u}_k'$, respectively, each weighted by $2^{(k\,{\rm mod}\, 23)-23}$, assuming all other bits are varied independently and treated as unknown.
$D_{\lambda}(\cdot||\cdot)$ denote Rényi divergence of order $\lambda$. $\mathbb{E}_{\boldsymbol{\rho}_{k}\sim Q_{k,u_k'}}\big(\frac{Q_{k,u_k}[\boldsymbol{\rho}_k]}{Q_{k,u_k'}[\boldsymbol{\rho}_k]}\big)^{\lambda}$ is the key term of Rényi divergence.

Suppose that the bit-level difference between ${u}_k$ and ${u}_k'$ attains its maximum value (i.e., $2^{(k{\,\rm \,mod} 23)-23}$) with probability $q_k$, or zero with probability $1 - q_k$. Then, the expectation of its bit-level distance between ${u}_k$ and ${u}_k'$ is $2^{(k{\,\rm \,mod} 23)-23}q_k$.

Now, we can introduce an auxiliary bitstream $\boldsymbol{b}\in\mathbb{B}^{23M\times1}$. The $k$-th bit of its fraction part, $b_k$, differs from $u_k'$ with the maximum bit-level distance, with probability $1$. 
Let $Q_{k,b_k}[\boldsymbol{\rho}_k]$ denote the PMFs of the decimal representations $\boldsymbol{\rho}_k$ after undergoing the bit-flipping operation on \( b_k \).
Based on the auxiliary distribution $Q_{k,b_k}[\boldsymbol{\rho}_k]$ and $Q_{k,u_k'}[\boldsymbol{\rho}_k]$, the output of $Q_{k,u_k}[\boldsymbol{\rho}_k]$ is a mixed distribution:
\begin{align}
    Q_{k,u_k}[\boldsymbol{\rho}_k]= q_kQ_{k,b_k}[\boldsymbol{\rho}_k] + (1-q_k)Q_{k,u_k'}[\boldsymbol{\rho}_k].\label{eq:mix_distribution}
\end{align}

Next, we factor in the weight \(2^{(k\,{\rm mod}\, 23)-23}\) corresponding to the $k$-th fraction bit. 
Since only the fraction parts are encoded into the bitstream, $u_k'$ and $b_k$ correspond to the $m$-th decimal number ($m=\left\lfloor \frac{k}{23} \right\rfloor$) of the model parameter vector, i.e., 
$\boldsymbol{u}_{m}' = \sum_{i=0}^{22} u_{i+23m}' 2^{i-23}$ and $\boldsymbol{b}_{m} = \sum_{i=0}^{22} b_{i+23m} 2^{i-23}$. 
Clearly, both $\boldsymbol{u}_{m}'$ and $\boldsymbol{b}_{m}$ lie within the range $[0,1)$.

Then, we evaluate the probability of distinguishing between the pair $(\boldsymbol{u}_{m}', \boldsymbol{b}_{m})$ when only $u_k'$ and $b_k$ are independently considered. Since the other bits of $\boldsymbol{u}_{m}'$ and $\boldsymbol{b}_{m}$ are independent and unknown, the resolution with which the real number can be identified is $2^{(k{\,\rm mod \,} 23) - 23}$. We divide the range $[0,1)$ into $2^{23-(k{\,\rm mod \,} 23)}$ intervals. Each interval collects the numbers whose fraction parts share identical bits from the $22$nd bit down to the $(k{\,\rm mod \,} 23)$-th bit, while the bits from position $(k{\,\rm mod \,} 23)-1$ down to $0$ may vary and its distribution are unknown. 
In particular, $\boldsymbol{u}_{m}'$ and $\boldsymbol{b}_{m}$ correspond to two distinct intervals, $\boldsymbol{X}_{\boldsymbol{u}_{m}'}$ and $\boldsymbol{X}_{\boldsymbol{b}_{m}} $, respectively.
The other intervals indicate the distribution of the other bits ($i\neq k{\rm \,mod\,} 23 $) being flipped or not, which does not contribute to distinguishing between the pair $(\boldsymbol{u}_{m}', \boldsymbol{b}_{m})$.

Without loss of generality, we analyze the output distribution of the decimal representation of $\hat{u}_k'$ after applying the bit-flipping operation to $u_k'$ with probability $p$.
 \begin{itemize}
     \item The probability that the received value $\hat{u}_k'$ equals $b_k$ is $p$. 
Since the other bits are governed by independent mechanisms and treated as unknown, the distribution of the decimal representation of $\hat{u}_k'$ must rely on random guessing for the remaining bits. 
Consequently, the probability that the interval $\boldsymbol{X}_{\boldsymbol{b}_{m}}$, corresponding to the decimal representation $\boldsymbol{\rho}_k$, aligns with that of $\boldsymbol{b}_m$ is $2^{(k\,{\rm mod}\, 23)-23} \cdot p$.
\item The received value $\hat{u}_k'$ remains $\hat{u}_k'$ with probability $1 - p$. 
Consequently, the probability that the interval $\boldsymbol{X}_{\boldsymbol{u}_{m}'}$, corresponding to the decimal representation $\boldsymbol{\rho}_k$ of $\hat{u}_k'$, aligns with that of $\boldsymbol{u}_m'$ is $2^{(k\,{\rm mod}\, 23)-23} \cdot (1 - p)$.
\item 
When the decimal representation of $\hat{u}_k'$ falls into intervals other than $\boldsymbol{X}_{\boldsymbol{b}_{m}}$ and $\boldsymbol{X}_{\boldsymbol{u}_{m}'}$, it does not help distinguish between the pair $(\boldsymbol{u}_{m}', \boldsymbol{b}_{m})$. 
The probability of any of these intervals is $2^{(k\,{\rm mod}\, 23) - 23}$, and the total probability of falling into these indistinguishable intervals is $1 -  2^{(k\,{\rm mod}\, 23) - 23}$.%
 \end{itemize}%
By summarizing the above three cases, the overall distribution of $\boldsymbol{\rho}_k \sim Q_{k,u_k'}[\boldsymbol{\rho}_k]$ is given by
\begin{align}
&Q_{k,u_k'}[\boldsymbol{\rho}_k] =\notag\\
&\quad\!\!\!\!\!
\begin{cases}
\Pr(\boldsymbol{\rho}_k \in \boldsymbol{X}_{\boldsymbol{b}_{m}}) = 2^{(k\,{\rm mod}\, 23)-23} \cdot p; \\
\Pr(\boldsymbol{\rho}_k \in \boldsymbol{X}_{\boldsymbol{u}_{m}'}) = 2^{(k\,{\rm mod}\, 23)-23} \cdot (1 - p); \\
\Pr(\boldsymbol{\rho}_k \in [0,1) \!\setminus \!\boldsymbol{X}_{\boldsymbol{b}_{m}} \!\setminus\! \boldsymbol{X}_{\boldsymbol{u}_{m}'}) \!\!= \!1 \!\!- \!2^{(k\,{\rm mod}\, 23)-23}.
\end{cases}
\label{uk'_distribution}
\end{align}
By following the same analysis steps, the distribution of $Q_{k,b_k}[\boldsymbol{\rho}_k]$ is given by
\begin{align}
&Q_{k,b_k}[\boldsymbol{\rho}_k] =\notag\\
&\quad\!\!\!\!\!
\begin{cases}
\Pr(\boldsymbol{\rho}_k \in \boldsymbol{X}_{\boldsymbol{b}_{m}}) \!\!= \!\!2^{(k\,{\rm mod}\, 23)\!-\!23} \cdot (1 \!\!-\!\! p); \\
\Pr(\boldsymbol{\rho}_k \in \boldsymbol{X}_{\boldsymbol{u}_{m}'}) \!\!= \!\!2^{(k\,{\rm mod}\, 23)\!-\!23} \cdot p; \\
\Pr(\boldsymbol{\rho}_k \in [0,1) \!\setminus \!\boldsymbol{X}_{\boldsymbol{b}_{m}} \!\setminus\! \boldsymbol{X}_{\boldsymbol{u}_{m}'}) \!\!=\!\! 1 \!-\! 2^{(k\,{\rm mod}\, 23)-23}.
\end{cases}
\label{bk_distribution}
\end{align}
By substituting \eqref{uk'_distribution} and \eqref{bk_distribution} into \eqref{eq:mix_distribution}, it follows that
\begin{align}
&\!\!\!Q_{k,u_k}[\boldsymbol{\rho}_k] =\notag
\\
&\!\begin{cases}
\Pr(\boldsymbol{\rho}_k \!\in\! \boldsymbol{X}_{\boldsymbol{b}_{m}}\!)\!\! =\!\! 2^{(k\,{\rm mod}\, 23)\!-23}  [p(1 \!\!-\!\! q_k) \!\!+\!\! (1 \!\!-\! p)q_k]; \\\!
\Pr(\boldsymbol{\rho}_k \!\in\! \boldsymbol{X}_{\boldsymbol{u}_{m}'}\!)\! \!=\!\! 2^{(k\,{\rm mod}\, 23)\!-23}  [(1 \!\!-\!\! p)(1 \!\!-\!\! q_k) \!\!+\!\! p q_k]; \\\!
\Pr(\boldsymbol{\rho}_k \in [0,1) \!\setminus\! \boldsymbol{X}_{\boldsymbol{b}_{m}} \!\setminus \!\boldsymbol{X}_{\boldsymbol{u}_{m}'}) \!= \!1\! -\! 2^{(k\,{\rm mod}\, 23)-23}.
\end{cases}
\label{uk_distribution}
\end{align}
Then, we can rewrite $\mathbb{E}_{\boldsymbol{\rho}_{k}\sim Q_{k,u_k'}}\Big(\frac{Q_{k,u_k}[\boldsymbol{\rho}_k]}{Q_{k,u_k'}[\boldsymbol{\rho}_k]}\Big)^{\lambda}$ as
{\small
\setlength{\abovedisplayskip}{3pt}
\setlength{\belowdisplayskip}{3pt}
\begin{subequations}
\begin{align}
&\notag\mathbb{E}_{\boldsymbol{\rho}_{k}\sim Q_{k,u_k'}}\Big(\frac{Q_{k,u_k}[\boldsymbol{\rho}_k]}{Q_{k,u_k'}[\boldsymbol{\rho}_k]}\Big)^{\lambda}
\\=&\Pr(\boldsymbol{\rho}_{k}\!\in\!\boldsymbol{X}_{\boldsymbol{b}_{m}})\Big(\frac{Q_{k,u_k}[\boldsymbol{\rho}_k]}{Q_{k,u_k'}[\boldsymbol{\rho}_k]}\Big)^{\lambda} \!+\!\Pr(\boldsymbol{\rho}_{k}\!\in\!\boldsymbol{X}_{\boldsymbol{u}_{m}'})\Big(\frac{Q_{k,u_k}[\boldsymbol{\rho}_k]}{Q_{k,u_k'}[\boldsymbol{\rho}_k]}\Big)^{\lambda} \notag\\&+\Pr(\boldsymbol{\rho}_{k}\in[0,1)\setminus\boldsymbol{X}_{\boldsymbol{b}_{m}}\setminus\boldsymbol{X}_{\boldsymbol{u}_{m}'})\Big(\frac{Q_{k,u_k}[\boldsymbol{\rho}_k]}{Q_{k,u_k'}[\boldsymbol{\rho}_k]}\Big)^{\lambda} \label{weightonly_a}
\\
=&(1-2^{(k\,\mathrm{mod}\, 23)-23}) \!+\! 2^{(k\,\mathrm{mod}\, 23)-23}p \cdot ((1\! -\! q_k) + q_k  \tfrac{1 \!-\! p}{p})^{\lambda} \notag\\
&+ 2^{(k\,\mathrm{mod}\, 23)-23}(1-p) \cdot ((1 - q_k) + q_k  \tfrac{p}{1 - p})^{\lambda} \label{eq:ratio_b}
\\
\leq& (1-2^{(k\,\mathrm{mod}\, 23)-23}) + 2^{(k\,\mathrm{mod}\, 23)-23}((1-q_k) + q_k \cdot \tfrac{(1-p)^{\lambda-1}}{p^{\lambda-1}}) \label{eq:ratio_c}
\\
=& 1 - 2^{(k\,\mathrm{mod}\, 23)-23} q_k \left(\tfrac{(1-p)^{\lambda-1}}{p^{\lambda-1}} - 1\right), \label{eq:ratio_d}%
\end{align}%
\end{subequations}%
}%
where \eqref{weightonly_a} is the expansion of $\mathbb{E}_{\boldsymbol{\rho}_{k}\sim Q_{k,u_k'}}\big(\frac{Q_{k,u_k}[\boldsymbol{\rho}_k]}{Q_{k,u_k'}[\boldsymbol{\rho}_k]}\big)^{\lambda}$, \eqref{eq:ratio_b} is obtain by substituting \eqref{uk_distribution} and \eqref{uk'_distribution} in \eqref{weightonly_a}, and \eqref{eq:ratio_c} is due to the following inequality, 
\begin{align}
    p\!\cdot\!\big((1\!-\!q_k)\!&+\!q_k\!\cdot\!\tfrac{1\!-\!p}{p}\big)^{\lambda}
    +(1\!-\!p)\!\cdot\!\big((1\!-\!q_k)\!+\!q_k\!\cdot\!\tfrac{p}{1\!-\!p}\big)^{\lambda} \notag\\
    &\leq (1\!-\!q_k)\!+\!q_k\!\cdot\!\tfrac{(1\!-\!p)^{\lambda-1}}{p^{\lambda-1}}, \label{ieq1}
\end{align}
which holds under the conditions \( 0 < q_k < 1 \), \( 0 < p < 0.5 \), and \( \lambda > 1 \) and can be verified numerically. 

By substituting \eqref{eq:ratio_d} into \eqref{define_divergence_lambda_L}, the Rényi divergence can be upper-bounded by
\begin{align}
    D_{\lambda}\!\left(Q_{k,u_k}[\boldsymbol{\rho}_k]\,\|\,Q_{k,u_k'}[\boldsymbol{\rho}_k]\right)
    \leq \tfrac{q_k\,2^{(k\,\mathrm{mod}\,23)-23}}{\lambda\!-\!1}
    \left(\tfrac{(1\!-\!p)^{\lambda\!-\!1}}{p^{\lambda\!-\!1}}\!-\!1\right).\label{D_k}
\end{align}
Substituting \eqref{D_k} into \eqref{D_all_definition}, the upper bound on the divergence for the entire bitstream over all bits \(\forall k = 1, \cdots, 23M\) can be rewritten as  
\begin{align}
    D_{\lambda}\!\left(Q_{\boldsymbol{u}} \!\| Q_{\boldsymbol{u}'}\right)
    &\leq \tfrac{1}{\lambda\!-\!1} {\sum}_{k=1}^{23M} q_k\,2^{(k\,\mathrm{mod}\,23)\!-\!23}
    \left(\tfrac{(1\!-\!p)^{\lambda\!-\!1}}{p^{\lambda\!-\!1}}\!-\!1\right) \notag \\
    &= \tfrac{{\bar\kappa}_{\Delta\boldsymbol{\omega}_{\max,n}}}{\lambda\!-\!1}
    \left(\tfrac{(1\!-\!p)^{\lambda\!-\!1}}{p^{\lambda\!-\!1}}\!-\!1\right),\notag
\end{align}
where \( \sum_{k=1}^{23M} q_k 2^{(k\,{\rm mod}\, 23)-23} = {\bar\kappa}_{\Delta\boldsymbol{\omega}_{\max,n}} \) is the expected bit-level distance, as defined in \eqref{expected_bit_level_distance}.
When 
\(
\frac{{\bar\kappa}_{\Delta\boldsymbol{\omega}_{\max,n}}}{\lambda - 1} \left( \frac{(1 - p)^{\lambda - 1}}{p^{\lambda - 1}} - 1 \right) \leq \frac{\epsilon}{K},
\)
we can directly obtain~\eqref{D_all_definition}; i.e., $(\lambda, \frac{\epsilon}{K})$-Rényi DP is satisfied in each round.
 
Further, we evaluate the end-to-end BER required to ensure that the bit-flipping mechanism satisfies $(\lambda,\epsilon)$-Rényi DP across all communication rounds, as given by
\begin{align}
\tfrac{{\bar\kappa}_{\Delta\boldsymbol{\omega}_{\max,n}}}{\lambda\!-\!1}
\left( \tfrac{(1\!-\!p)^{\lambda\!-\!1}}{p^{\lambda\!-\!1}} \!-\! 1 \right)
&\leq \tfrac{\epsilon}{K} \notag\\
\Leftrightarrow\quad
p &\geq \left(1 +
\Big( \tfrac{(\lambda\!-\!1)\epsilon}{K {\bar\kappa}_{\Delta\boldsymbol{\omega}_{\max,n}}} \right)^{\frac{1}{\lambda\!-\!1}}
\Big)^{-1}.
\label{eq:Joint_BER_preserving_privacy}
\end{align}
Note that~\eqref{eq:Joint_BER_preserving_privacy} holds under the assumption of \(p < \frac{1}{2}\).  
This completes the proof.}

\subsection{Proof of \textbf{Lemma~\ref{lemma:z}}}\label{Appendix:2}
We start with~\eqref{eq:E_z_tn}.
As discussed in Section~\ref{subsection_map}, each model element $\omega_{t,n,m}$, $m=1,\cdots,M$, 
is mapped into a fixed-point number, i.e., $\omega_{t,n,m}+3\times2^{(c_{30}c_{29}...c_{23})_{2}-126}$ within the range $[2^{(c_{30}c_{29}...c_{23})_{2}-125},2^{(c_{30}c_{29}...c_{23})_{2}-124 }]$.
After superimposing the artificial and communication bit-flipping noise, the demodulated value is still within the above range, given the error-free exponent and sign parts. 
Based on~\eqref{expectation_of_a}, the expectation of the demodulated value is
\begin{align}
\notag(1-&2p_{t,n})(\omega_{t,n,m}+3\times2^{(c_{30}c_{29}...c_{23})_{2}-126})+\\&2^{(c_{30}c_{29}\dots c_{23})_{2}-125}\Big(2p_{t,n}+{\sum}_{i=1}^{23}2^{-i}p_{t,n}\Big).
\end{align}   
After reversing the mapping operation, the expectation of the finally recovered model parameter is given by
\begin{subequations}
\begin{align}
\mathbb{E}(\tilde{\omega}_{t,n,m})
\notag& 
=(1\!\!-\!\!2p_{t,n}){\omega_{t,n,m}}\!\!-\!p_{t,n}\!\times\!2^{(c_{30}c_{29}...c_{23})_{2}\!-\!125}\!\times\! 2^{-23}\\
\!\!\!\!\!\!\!\!\!\!\!\!\!\!\!\!& \approx  (1-2p_{t,n}){\omega_{t,n,m}}.\notag
\end{align} 
\end{subequations}  
Here, $p_{t,n}\times2^{(c_{30}c_{29}...c_{23})_{2}-125}\times 2^{-23}\ll(1-2p_{t,n}){\omega}_{t,n,m}$ is negligibly smaller than $(1-2p_{t,n}){\omega}_{t,n,m}$, leading to~\eqref{eq:E_z_tn}.

Next, we prove~\eqref{eq:Var_z_tn}.
Since each element in the demodulated model parameters is within the same range $[2^{(c_{30}c_{29}...c_{23})_{2}-125}$, $2^{(c_{30}c_{29}...c_{23})_{2}-124 }]$ and the mapping operation does not affect the randomness, from~\eqref{Var_of_a}, the variance of each element is
\begin{align}
  \!\!\!\!  \mathbb{E}\left(\tilde{\omega}_{t,n,m}\!\!-\!\!\mathbb{E}(\tilde{\omega}_{t,n,m})\right)^2\!\!=&\frac{(1\!\!-\!\!4^{\!-\!23})}{3}\times \nonumber \\
  &p_{t,n}(1\!\!-\!\!p_{t,n})2^{2(c_{30}c_{29}...c_{23})_{2}\!-\!250}.\label{var_w_j}
\end{align}

Based on the definition of $\ell_2$-norm and 
\eqref{eq:z_tn}, we have
    \begin{align}
    \mathbb{E}(\Vert \boldsymbol{z}_{t,n}\!\!-\!\!\mathbb{E}(\boldsymbol{z}_{t,n})\Vert_2^2)
    &\!\!=\!\!{\sum}_{m\!=\!1}^M \mathbb{E}\left(\tilde{\omega}_{t,n,m}\!\!-\!\!\mathbb{E}(\tilde{\omega}_{t,n,m})\right)^2,\label{var_w_2}
    \end{align}
where~\eqref{var_w_2} is obtained because the random values, $\tilde{\omega}_{t,n,m},\forall m$, are independent. By plugging~\eqref{var_w_j} into~\eqref{var_w_2}, we obtain~\eqref{eq:Var_z_tn}.

Then, we prove~\eqref{eq:Ez^2}, as given by
\begin{subequations}\small\begin{align}
&\mathbb{E}( \Vert\boldsymbol{z}_{t,\mathcal{G}}\Vert_2^2)=\Vert\mathbb{E}(\boldsymbol{z}_{t,\mathcal{G}})\Vert_2^{2}\!\!+\!\!\mathbb{E}(\Vert\boldsymbol{z}_{t,\mathcal{G}}\!\!-\!\!\mathbb{E}(\boldsymbol{z}_{t,\mathcal{G}})\Vert_2^{2})\label{eq:proof_of_E_z_tG_a}
 \\&\leq{\sum}_{n\in\mathcal{N}}q_n\Vert\mathbb{E}(\boldsymbol{z}_{t,n})\Vert_2^{2}
\!\!+\!\!{\sum}_{n\in\mathcal{N}}q_n^2\mathbb{E}(\Vert \boldsymbol{z}_{t,n}\!\!-\!\!\mathbb{E}(\boldsymbol{z}_{t,n})\Vert_2^2)\label{eq:proof_of_E_z_tG_b}
\\&\leq {\sum}_{n\in\mathcal{N}}q_n p_{t,n}^2 \nu_{2}^2+\!\!\frac{(1\!\!-\!\!4^{\!-\!23})M}{3}{\sum}_{n\in\mathcal{N}}q_n^2 p_{t,n}(1\!\!-\!\!p_{t,n})\times \notag
\\
&\quad \quad\quad\quad\quad\quad\quad\quad\quad\quad\quad\quad\quad2^{2(c_{30}c_{29}...c_{23})_{2}\!-\!250},\label{eq:proof_of_E_z_tG_c}
\end{align}
\end{subequations}
where~\eqref{eq:proof_of_E_z_tG_a} is based on $\mathbb{E}(\left\Vert a\right\Vert_2 ^{2})=\mathbb{E}(\left\Vert a-\mathbb{E}[a]\right\Vert_2 ^{2})+\left\Vert \mathbb{E}[a]\right\Vert_2 ^{2}$ with $a=\boldsymbol{z}_{t,\mathcal{G}}$. 
\eqref{eq:proof_of_E_z_tG_b} is obtained by applying Cauchy's inequality to $\Vert\mathbb{E}(\boldsymbol{z}_{t,\mathcal{G}})\Vert_2^{2}$ and then exploiting the equality between the sum variance of independent variables $\{\boldsymbol{z}_{t,n}, \forall n\}$ and the variance of their sum.
\eqref{eq:proof_of_E_z_tG_c} is obtained by substituting~\eqref{eq:E_z_tn} and~\eqref{eq:Var_z_tn} into~\eqref{eq:proof_of_E_z_tG_b} and applying $|\omega_{t,n,m}|\leq \nu_{\infty}\leq2^{(c_{30}c_{29}...c_{23})_{2}-126}$, $\forall m$. \eqref{eq:proof_of_E_z_tG_c} can be rewritten as~\eqref{eq:Ez^2}.

\subsection{Proof of \textbf{Theorem~\ref{theo:convergence}}}\label{Appendix:convergence_proof}
Referring to~\cite[Eq. (11)]{Li2020On}, we have
\begin{align}
    \Vert\bar{\boldsymbol{\omega}}_t-\mathbf{w}^*\Vert_2^2\leq(1-\eta\mu) \Vert\bar{\boldsymbol{\omega}}_{t-1}-\mathbf{w}^*\Vert_2^2+\eta^2 B, \;\forall t,\label{eq:delta_omega}
\end{align}
where no bit-flipping DP noise is considered and $B=6\alpha\Gamma+8(E-1)^2G^2$ full-batch gradient descent with $\sigma_n=0,\forall n$. 

When taking the bit-flipping DP noise into account, the one-step upper bound of $\Vert\bar{\mathbf{w}}_t-\mathbf{w}^*\Vert_2^2,\forall t$ is given by
\begin{subequations}\small
    \begin{align}
    \forall t\notin\mathcal{I}_E:\,&
    \Vert\bar{\mathbf{w}}_t\!-\!\mathbf{w}^*\Vert_2^2\!\leq\!(1\!-\!\eta\mu) \Vert\bar{\mathbf{w}}_{t-1}\!-\!\mathbf{w}^*\Vert_2^2\!+\!\eta^2 B; \label{eq:delta rE+i}\\
    \forall t\in\mathcal{I}_E:\, &
    \mathbb{E}\!\left(\Vert\bar{\mathbf{w}}_t\!-\!\mathbf{w}^*\Vert_2^2\right)\!\leq\!(1\!+\!\sqrt{p_{\max}})\times \nonumber\\
    &\quad\quad\quad\quad \left[\!(1\!-\!\eta\mu) \!\Vert\bar{\mathbf{w}}_{t\!-\!1}\!-\!\mathbf{w}^*\!\Vert_2^2\!+\!\eta^2 \!B\!+\!\frac{X^{\rm BF}_t}{\!{\!\sqrt{p_{\max}}}\!}\!\right]\!,\label{eq:delta rE}
\end{align}
\end{subequations}
where~\eqref{eq:delta rE+i} is obtained by
plugging $\bar{\boldsymbol{\omega}}_t=\bar{\mathbf{w}}_t,t\notin\mathcal{I}_E$ in~\eqref{eq:delta_omega}.
As for~\eqref{eq:delta rE}, $\bar{\mathbf{w}}_{t}=\bar{\boldsymbol{\omega}}_{t}+\boldsymbol{z}_{t,\mathcal{G}}$ when $t=kE\in\mathcal{I}_E$; thus, 
 \begin{align}
   \notag   \mathbb{E}&\left(\Vert\bar{\mathbf{w}}_t\!\!-\!\!\mathbf{w}^*\Vert_2^2\right)=\mathbb{E}\left(\Vert\bar{\boldsymbol{\omega}}_t+\boldsymbol{z}_{t,\mathcal{G}}-\mathbf{w}^*\Vert_2^2\right)
   \\&\!\!\leq \!\!(1\!\!+\!\!\sqrt{p_{\max}}) \Vert\bar{\boldsymbol{\omega}}_t\!\!-\!\!\mathbf{w}^*\Vert_2^2\!\!+\!\!(1\!\!+\!\!\frac{1}{\sqrt{p_{\max}}})\mathbb{E}\left(\Vert\boldsymbol{z}_{t,\mathcal{G}}\Vert_2^2\right),\label{eq:delta rE2}
 \end{align}
where the expectation is taken to capture the randomness of bit-flipping noise in the $k$-th communication round. 
By substituting~\eqref{eq:Ez^2} and~\eqref{eq:delta_omega} in~\eqref{eq:delta rE2}, we obtain~\eqref{eq:delta rE}.

By substituting~\eqref{eq:delta rE+i} for $t\!= \!kE\!+\!i,\cdots,kE\!+\!1$ into $ \Vert\bar{\mathbf{w}}_{kE\!+\!i}\!-\!\mathbf{w}^*\Vert_2^2$ and using mathematical induction, it follows that 
\begin{align}
\!\!\!\!\!\!\Vert\bar{\mathbf{w}}_{kE\!+\!i}\!\!-\!\!\mathbf{w}^*\Vert_2^2\!\!\leq\!\!(1\!\!-\!\!\eta\mu)^{i}\Vert\bar{\mathbf{w}}_{kE}\!\!-\!\!\mathbf{w}^*\Vert_2^2\!\!+\!\!\eta^{2}B\frac{1\!\!-\!\!(1\!\!-\!\!\eta\mu)^{i}}{\eta\mu}.\label{eq:conver_kEi}
\end{align}
Then, we can obtain the upper bound of $\mathbb{E}\left(\Vert\bar{\mathbf{w}}_{T}-\mathbf{w}^*\Vert_2^2 \right)$ as 
\begin{subequations}\label{eq:conver_kE}\footnotesize
    \begin{align}
&\!\!\!\!\! \mathbb{E}\big(\Vert\bar{\mathbf{w}}_{T}\!\!-\!\!\mathbf{w}^*\Vert_2^2 \big)\!\!
\leq\!\!(1\!\!+\!\!\sqrt{p_{\max}})\!\left[\!(1\!\!-\!\!\eta\mu)\mathbb{E}\big(\Vert\bar{\mathbf{w}}_{T\!-\!1}\!\!-\!\!\mathbf{w}^*\Vert_2^2 \big)\!\!+\!\!\eta^{2}\!B\!\!+\!\!\frac{X^{\rm BF}_{T}}{{\sqrt{p_{\max}}}} \right]\label{eq:conver_kEa}\\&\!\!\!\!\!
 \leq\!\!(1\!\!+\!\!\sqrt{p_{\max}})\!\!\left[\!(1\!\!-\!\!\eta\mu)^{E}\mathbb{E}\!\left(\Vert\bar{\mathbf{w}}_{T\!-\!E}\!\!-\!\!\mathbf{w}^*\Vert_2^2\right)\!\!+\!\!\eta^{2}\!B\frac{1\!\!-\!\!(1\!\!-\!\!\eta\mu)^{E}}{\eta\mu}\!\!+\!\!\frac{ X^{\rm BF}_{T}}{\sqrt{p_{\max}}}\right]\label{eq:conver_kEb}\\\notag
&\!\!\!\!\!\leq(1+\sqrt{p_{\max}})^{K}(1\!\!-\!\!\eta\mu)^{T}\Vert\bar{\mathbf{w}}_{0}\!\!-\!\!\mathbf{w}^*\Vert_2^2\\&\!\!\!\!
\;+\!\!\!\!\underset{\forall i\leq K}{\sum}\!\!\!(1\!\!-\!\!\eta\mu)^{(i\!-\!1)E\!}(1\!\!+\!\!\sqrt{p_{\max}})^{i}(\eta^{2}\!B\frac{1\!\!-\!\!(1\!\!-\!\!\eta\mu)^{E}}{\eta\mu}\!\!+\!\!\frac{ X^{\rm BF}_{\!iE\!}}{\sqrt{p_{\max}}})\label{eq:conver_kEc}
\\\notag \leq&\!\!\frac{1\!\!-\!\!\left((1\!\!-\!\!\eta\mu)^{E}(1\!\!+\!\!\sqrt{p_{\max}})\right)^{K}}{1\!\!-\!\!(1\!\!-\!\!\eta\mu)^{E}(1\!\!+\!\!\sqrt{p_{\max}})}(1\!\!+\!\!\sqrt{p_{\max}})\Big(\eta^{2}B\frac{1\!\!-\!\!(1\!\!-\!\!\eta\mu)^{E}}{\eta\mu}\!\!+\!\!\frac{X^{\rm BF}}{\sqrt{p_{\max}}}\Big)
     \\&+(1+\sqrt{p_{\max}})^{K}(1-\eta\mu)^{T}\Vert\bar{\mathbf{w}}_{0}\!\!-\!\!\mathbf{w}^*\Vert_2^2,\label{eq:conver_kEd}
 \end{align}
\end{subequations}
where
\eqref{eq:conver_kEa} is based on~\eqref{eq:delta rE}.~\eqref{eq:conver_kEb} is obtained by substituting~\eqref{eq:conver_kEi} with $t=T-1$ into~\eqref{eq:conver_kEa}. 
Iterating~\eqref{eq:conver_kEb} for $t=K E,(K -1)E,\cdots,E$ yields~\eqref{eq:conver_kEc}. 
Eventually,~\eqref{eq:conver_kEd} is based on the definition $X^{\rm BF}=\underset{\forall t}{\max}\;X^{\rm BF}_t$.

\bibliographystyle{IEEEtran}
	\bibliography{ciations}
     \begin{IEEEbiography}[{\includegraphics[width=1in,height=1.25in,clip,keepaspectratio]{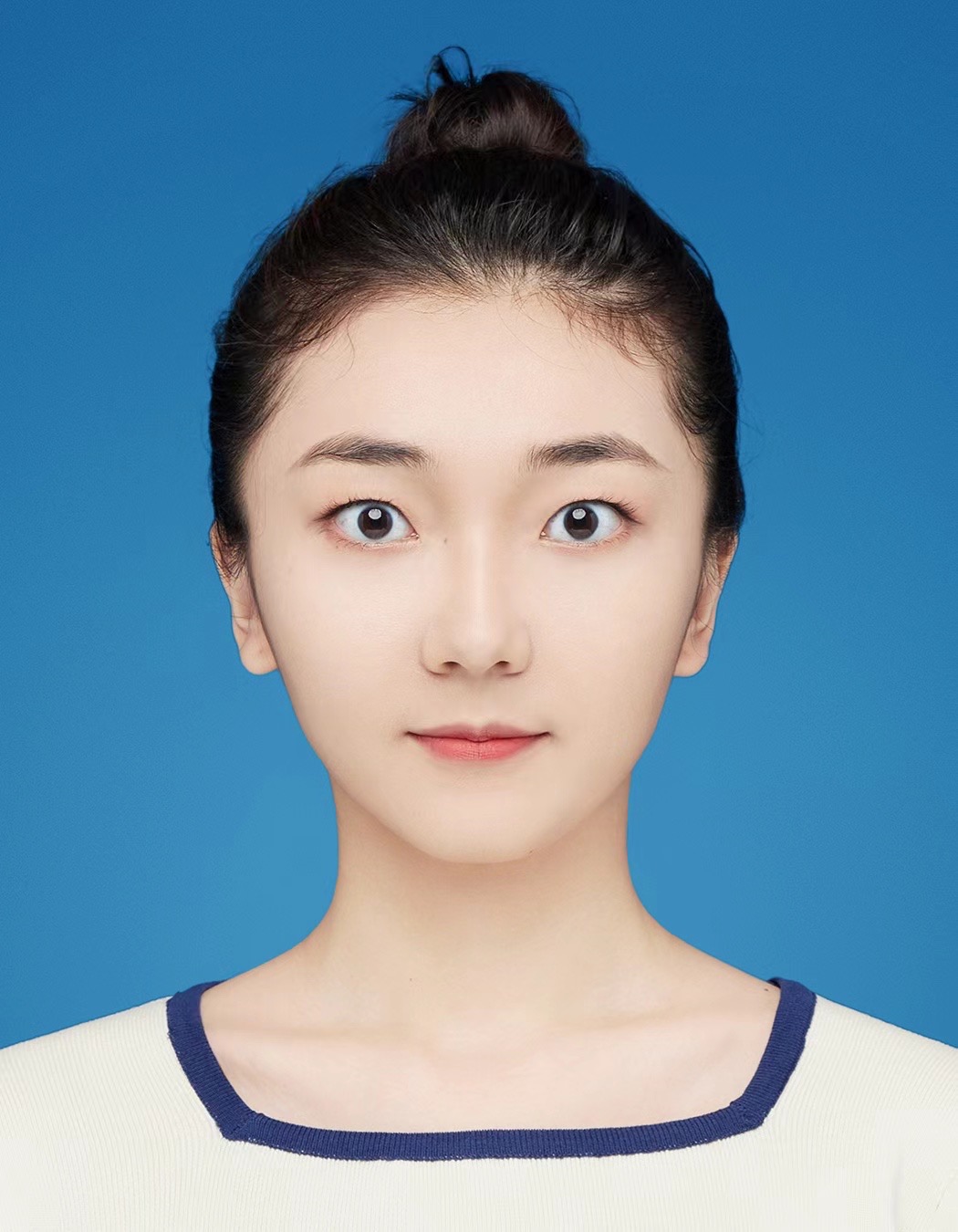}}]{Weicai Li} (Graduate Student Member, IEEE)
received the B.E. degree in communication engineering from Beijing University of Posts and Telecommunications (BUPT), China, in 2020. She is pursuing her Ph.D. with the School of Information and Communication Engineering at BUPT. From December 2022 to December 2023, she was a Visiting Scholar at the University of Technology Sydney. Her research interests include wireless federated learning, privacy computing, and semantic communication.
\end{IEEEbiography}
\begin{IEEEbiography}[{\includegraphics[width=1in,height=1.25in,clip,keepaspectratio]{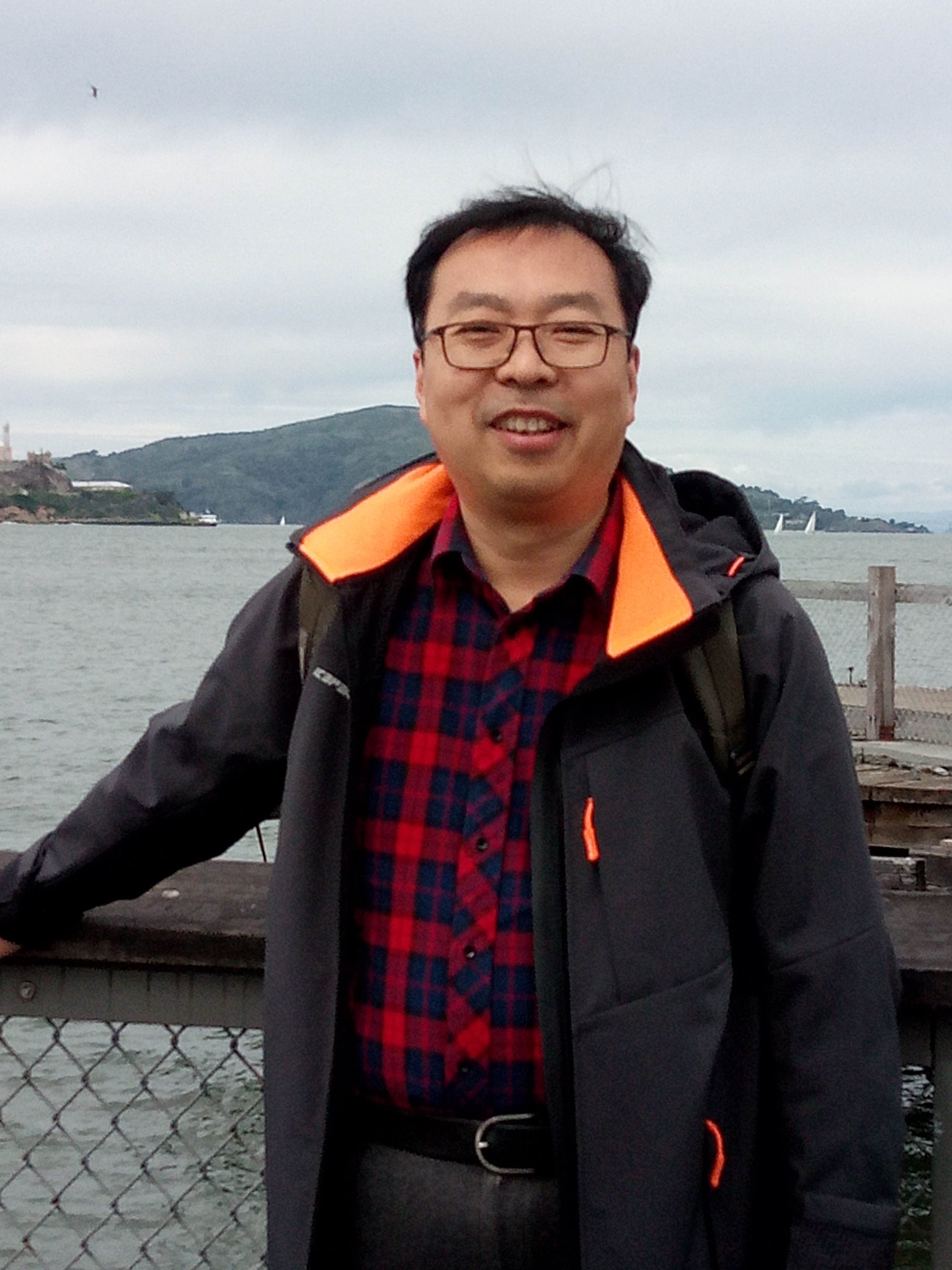}}]{Tiejun Lv}(Senior Member, IEEE) received the M.S. and Ph.D. degrees in electronic engineering from the University of Electronic Science and Technology of China (UESTC), Chengdu, China, in 1997 and 2000, respectively. From January 2001 to January 2003, he was a Postdoctoral Fellow at Tsinghua University, Beijing, China. In 2005, he was promoted to Full Professor at the School of Information and Communication Engineering, Beijing University of Posts and Telecommunications (BUPT). From September 2008 to March 2009, he was a Visiting Professor with the Department of Electrical Engineering at Stanford University, Stanford, CA, USA. He is the author of four books, one book chapter, more than 160 published journal papers and 200 conference papers on the physical layer of wireless mobile communications. His current research interests include signal processing, communications theory and networking. He was the recipient of the Program for New Century Excellent Talents in University Award from the Ministry of Education, China, in 2006. He received the Nature Science Award from the Ministry of Education of China for the hierarchical cooperative communication theory and technologies in 2015. He has won best paper award at CSPS 2022.
Dr. Lv has been an Editor for Sensors since 2024.
\end{IEEEbiography}
\begin{IEEEbiography}[{\includegraphics[width=1in,height=1.25in,clip,keepaspectratio]{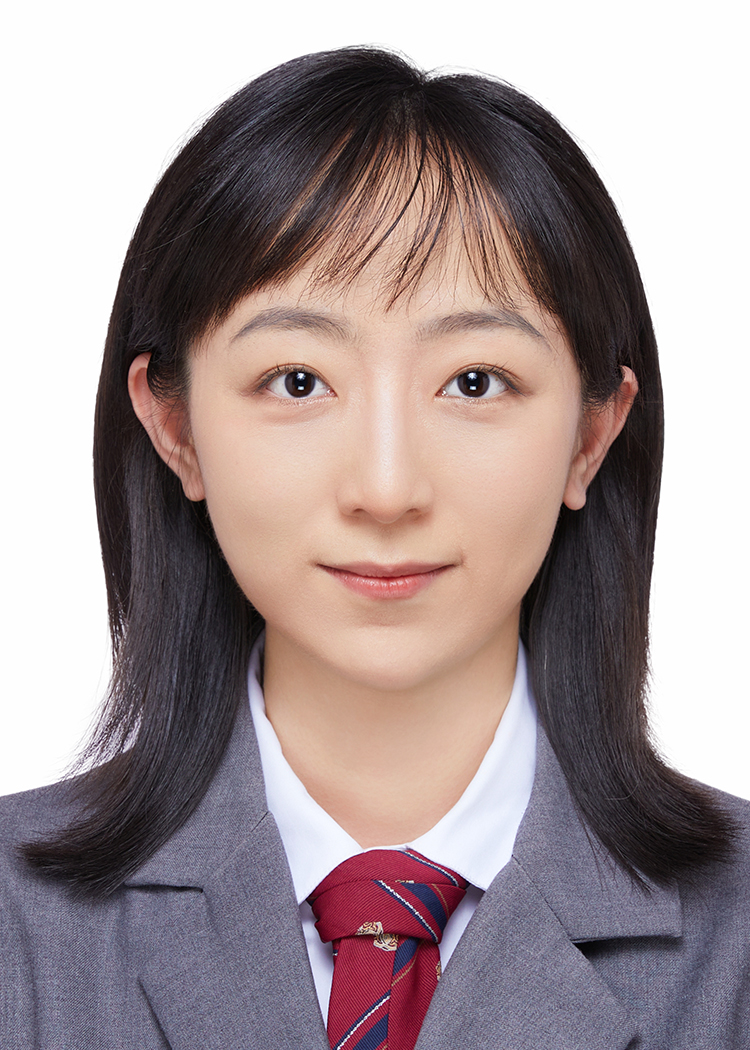}}]
{Xiyu Zhao}~received the B.E. degree in communication engineering from Beijing University of Posts and Telecommunications (BUPT), China, in 2020. She is pursuing her Ph.D. with the School of Information and Communication Engineering at BUPT. From June 2023 to December 2024, she was a Visiting Scholar at Macquarie University. Her research interests include wireless federated learning, distributed computing, and privacy-preserving.
\end{IEEEbiography}

\begin{IEEEbiography}
[{\includegraphics[width=1in,height=1.25in,clip,keepaspectratio]{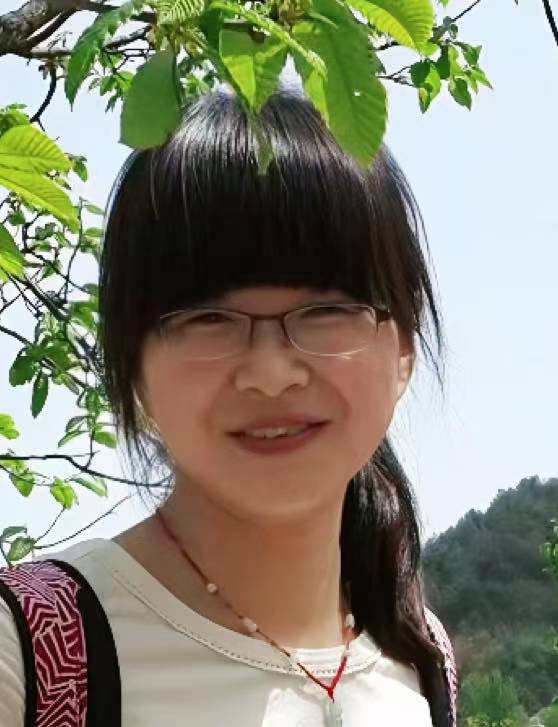}}]
{Xin Yuan} received the Ph.D. degree in communication engineering from Beijing University of Posts and Telecommunications (BUPT), China, in 2019.
Her research interests include wireless federated learning, distributed computing, and privacy-preserving.
\end{IEEEbiography}

\begin{IEEEbiography}
[{\includegraphics[width=1in,height=1.25in,clip,keepaspectratio]{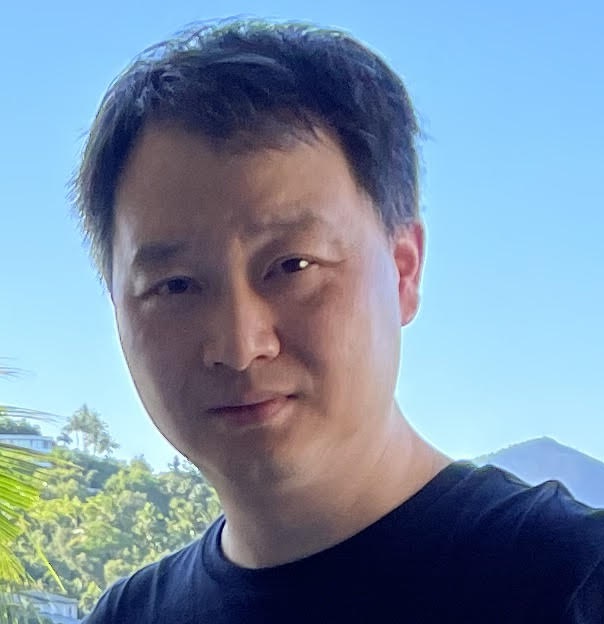}}]
{Wei Ni}~is a Mechatronics Program Advisor and Professor (Honorary) at Macquarie University, Sydney, New South Wales, Australia. His research interests include channel modeling, communications and signal processing, array signal processing, networking, multiple access, modulation and coding, radio access network (RAN), resource allocation, smart grid, blockchain, Internet-of-Things (IoT), autonomous systems, cyber-physical systems (CPS), and connected intelligent systems in general.
\end{IEEEbiography}

\end{document}